\let\classAND\AND
\let\AND\relax
\let\AND\classAND
\def\eqref#1{equation~\ref{#1}}
\def\1{\bm{1}}
\def\ry{{\textnormal{y}}}
\def\vg{{\bm{g}}}
\def\vw{{\bm{w}}}
\def\vz{{\bm{z}}}
\def\mA{{\bm{A}}}
\def\mD{{\bm{D}}}
\def\mG{{\bm{G}}}
\def\mI{{\bm{I}}}
\def\mM{{\bm{M}}}
\def\mR{{\bm{R}}}
\def\mW{{\bm{W}}}
\DeclareMathAlphabet{\mathsfit}{\encodingdefault}{\sfdefault}{m}{sl}
\SetMathAlphabet{\mathsfit}{bold}{\encodingdefault}{\sfdefault}{bx}{n}
\def\sS{{\mathbb{S}}}
\def\sZ{{\mathcal{Z}}}
\newcommand{\E}{\mathbb{E}}
\newcommand{\R}{\mathbb{R}}
\theoremstyle{plain}
\newtheorem{theorem}{Theorem}[section]
\newtheorem{lemma}[theorem]{Lemma}
\theoremstyle{definition}
\theoremstyle{remark}
\newcommand{\noisy}[1]{\ensuremath{\widetilde{#1}}}
\newcommand{\N}{\ensuremath{\mathcal{N}}}
\title{Choosing Public Datasets for Private Machine Learning via Gradient Subspace Distance\thanks{Authors GK and ZSW are listed in alphabetical order.}}
\author{
Xin Gu\thanks{Penn State University. {\tt xingu@psu.edu}.}
\and
Gautam Kamath\thanks{Cheriton School of Computer Science, University of Waterloo. {\tt g@csail.mit.edu}.}
\and
Zhiwei Steven Wu\thanks{Carnegie Mellon University. {\tt zstevenwu@cmu.edu}.}
}
\begin{document}
\maketitle

\begin{abstract}
Differentially private stochastic gradient descent privatizes model training by injecting noise into each iteration, where the noise magnitude increases with the number of model parameters. Recent works suggest that we can reduce the noise by leveraging public data for private machine learning, by projecting gradients onto a subspace prescribed by the public data. However, given a choice of public datasets, it is unclear why certain datasets perform better than others for a particular private task, or how to identify the best one. We provide a simple metric which measures a low-dimensional subspace distance between gradients of the public and private examples. We empirically demonstrate that it is well-correlated with resulting model utility when using the public and private dataset pair (i.e., trained model accuracy is monotone in the distance), and thus can be used to select an appropriate public dataset. We provide theoretical analysis demonstrating that the excess risk scales with this subspace distance. This distance is easy to compute and robust to modifications in the setting. Our code is publicly available at \href{https://github.com/XinGuu/Gradient-Subspace-Distance}{https://github.com/XinGuu/Gradient-Subspace-Distance}.
\end{abstract}

\section{Introduction}
Recent work has shown that machine learning (ML) models tend to memorize components of their training data~\citep{model-memorize-data}, and in fact attackers can often recover training samples from published models through carefully designed attacks \citep{extract-gpt2, membership-infer}.
This is a critical privacy issue when models are trained on private data. A popular approach to address this issue is to adopt 
\textit{Differential Privacy} (DP) \citep{dwork-dp} as a rigorous privacy criterion that provably limits the amount of information attackers can infer about any single training point. \textit{Differentially private stochastic gradient descent} (DPSGD) \citep{DP-SGD, dpsgd2, dpsgd3} is one of the most commonly used methods to train a ML model (differentially) privately. It makes two main modifications to vanilla SGD: 1) clipping per-sample gradients to ensure a bound on their $\ell_2$ norms; 2) adding Gaussian noise to the gradient.

One downside of adopting DP in ML is that we need to sacrifice utility of the trained model to guarantee privacy. Specifically, DPSGD noises the gradient at each step, with noise drawn from a spherical Gaussian distribution, $\mathcal{N}\left(\mathbf{0}, \sigma^{2} \mI_p\right)$, where $p$ is the model dimension (i.e., the number of model parameters) and the variance $\sigma^{2}$ scales the noise.
In order to bound the privacy leakage, the magnitude of noise introduced in each step must scale with $\sqrt{p}$.
Consequently, for many large models, the noise introduced may overwhelm the signal contributed by the original gradients, significantly diminishing the utility.

Several works have proposed methods to improve the utility of private machine learning~\citep{bypass,aws,cvpr,donot,KairouzRRT21, pmlr-v202-nasr23a, pmlr-v202-ganesh23a, dan1}. One fruitful direction uses \emph{public} data, i.e., data that is not subject to any privacy constraint.
There are primarily two types of approaches that incorporate public data in private training.
The first involves \emph{transfer learning}, where we pretrain the model on a public dataset and then (privately) finetune the model on a sensitive dataset for our target task~\citep{cvpr, DP-SGD,YuNBGIKKLMWYZ22,LiTLH22}. Another approach is based on \emph{pre-conditioning}, which exploits the empirical observation that during training, the stochastic gradients (approximately) stay in a lower-dimensional subspace of the $p$-dimensional gradient space. Consequently, some works find this subspace using the public data, and then project the sensitive gradients to this (public) subspace before privatization~\citep{bypass,aws,donot,KairouzRRT21}.
This reduces the magnitude of the introduced noise and generally improves utility over DPSGD without supplementary data.

However, this raises a number of natural questions.
From a scientific perspective:\textit{ why do some public datasets work better than others for a particular private task? }
One may intuitively believe that public datasets which ``look relevant'' to the private task would perform best, but this is not a precise statement, and furthermore (as we will empirically demonstrate) may be misleading or incorrect. 
Building on this question, from a pragmatic standpoint, how should one select which public dataset to use?

Our main contribution addresses both of these questions: we provide a simple metric for measuring the distance between datasets, and show that it is very well-correlated with model utility when treating one dataset as private and the other as public. 

\begin{table}[!h]
    \caption{GEP evaluation AUC and corresponding distance in descending order. We use the \emph{{same}} model setting for private training and distance computation. "-" means DP-SGD training without using any public data. We use $\epsilon=2, \delta=1e-5$. Mean and standard deviation are calculated over 3 runs.}
    \centering
    \begin{tabular}{llll}
        \toprule
        AUC                     & Private Dataset               & Public Dataset & Distance      \\ \hline
        
        \textbf{68.93\% / 0.05} & \multirow{7}{*}{ChestX-ray14} & ChestX-ray14   & \textbf{0.15} \\ \cline{1-1} \cline{3-4} 
        67.22\% / 0.21          &                               & SprXRay        & 0.32          \\ \cline{1-1} \cline{3-4} 
        67.20\%  / 0.34         &                               & CheXpert       & 0.32          \\ \cline{1-1} \cline{3-4} 
        66.61\% / 0.04          &                               & KagChest       & 0.36          \\ \cline{1-1} \cline{3-4} 
        65.37\%  / 0.43         &                               & Kneeos         & 0.38          \\ \cline{1-1} \cline{3-4} 
        64.76\%  / 0.16         &                               & -              & -             \\ \cline{1-1} \cline{3-4} 
        48.60\% / 0.02          &                               & CIFAR-100      & 0.55          \\ \hline
        \multicolumn{4}{l}{Time cost for each distance computation: \textbf{12s}}                            \\ \bottomrule
        \end{tabular}
\label{eval2}
\end{table}

We demonstrate its efficacy in both transfer learning and pre-conditioning settings. To summarize our contributions:
\begin{enumerate}
    \item \textbf{We introduce Gradient Subspace Distance (GSD), a metric to quantify the difference between private and public datasets.} 
    GSD is an easily computable quantity that measures the distance between two datasets.     
    
    \item \textbf{We find GSD is well-correlated with model utility when selecting public datasets in both pre-conditioning and transfer learning settings.} 
    As a representative example, \Cref{eval2} shows the utility of a privately trained model using a public dataset increases monotonically as GSD decreases.
    Our theoretical analysis demonstrates that the excess risk of Projected DP-SGD (a private training algorithm that leverages public data for gradient pre-conditioning) scales with the GSD. 
    
    \item \textbf{We show that GSD is \textit{transferable}.} 
    The ordering of GSD for several choices of public dataset remains fixed across architectures,  both simple (e.g., 2-layer CNN) and complex. Using these simple architectures as a proxy, we can efficiently compute GSDs which are still useful for privately training large models.
\end{enumerate}

\section{Related Work}
\paragraph{Transfer Learning} In the differentially private setting, it is now common to pre-train a model on public data, and then privately fine-tune on private data.
This can result in comparable utility as in the non-private setting, evidenced for both language models~\citep{donot, YuNBGIKKLMWYZ22, LiTLH22} and vision tasks~\citep{cvpr,DeBHSB22,MehtaTKC22}. 
In many cases, due to computational requirements, it may be challenging to pre-train a large model on a public dataset.
Instead, many practitioners will turn to pre-trained weights, which obviate the computational burden, but give less flexibility to choose an appropriate training dataset.
As a result, we use second-phase pre-training, in which we perform a second phase of pre-training with a modestly-sized public dataset. This has been proven to be useful in non-private setting~\citep{dontstop}. 

\paragraph{Pre-conditioning} \label{relatedwork:preconditioning} Empirical evidence and theoretical analysis indicate that while training deep learning models,  gradients tend to live in a lower-dimensional subspace~\citep{subspace1, subspace2, subspace3, aws, precondtheo}. This has led to methods for private ML which project the sensitive gradients onto a subspace estimated from the public gradients.
Using a small amount of i.i.d.\ public data can improve the accuracy of differentially private stochastic gradient descent in high-privacy regimes and achieve a dimension-independent error rate~\cite{bypass}. Similarly,  GEP~\cite{donot} utilizes public data to identify the most useful information carried by gradients, and then splits and clips them separately. Amid et al. proposed a DP variant of mirror descent that leverages public data to implicitly learn the geometry of the private gradients~\cite{dpmirror}. In the first-order approximation, it can be considered as using public gradients as a regularizer for DP-SGD. Gradient estimation based on public data can also be used as a preconditioner for adaptive optimizers like RMS-Prop \citep{dprmprop}.

\paragraph{Domain Adaptation} We aim to quantify the similarity between private and public datasets. One related area of research is distribution shift, or domain adaptation \citep {distshiftsurvey, distshiftsurvey2, distshiftsurvey3, distshiftsurvey4, Ben-DavidBCKPV10, Ben-DavidBCP06, tent}. At a high level, research in this area examines the problem of when the distributions of test and training data differ, which aligns with our goals. However, most work in this area focuses on reducing the gap between in- and out-of-distribution test errors, where target data is used repeatedly for accuracy improvement. Most of the work along this line assumes that the target data is also public or doesn't consider privacy, and is thus inappropriate for the private learning setting. To the best of our knowledge, the only work with a similar focus to us is Task2Vec~\citep{task2vec}, which uses the Fisher information matrix to represent a dataset as a vector, allowing for the measurement of a distance between two datasets. However, it is not suitable for private learning tasks as our empirical evaluation shows that Task2Vec fails to accurately rank the utility of public datasets.

\paragraph{Choosing Proper Public Data For Private ML} Perhaps the most comparable work is the independent and concurrent work of \cite{yu2023selective}, as both aim to select the best public data for private machine learning. \cite{yu2023selective} propose a private learning framework that uses private dataset selection to choose a subset of the pre-training dataset. While our goals are similar, there are differences in our methods and use cases. Our work focuses on scenarios with limited amounts of public data. In contrast, \cite{yu2023selective} focuses on first-phase pretraining with large-scale datasets (see~\Cref{second-phase}).
However, this is only applicable for organizations with large computational resources, who can afford to pretrain such large models. 
In contrast, we believe most practitioners and researchers (ourselves included) are much more compute-constrained, and will rely upon already pre-trained models, with little control over the first-phase pretraining set. 
This is the focus of our work. 
Finally, their method requires training a full-sized DP model, whereas our method can be executed in less than a minute.

\section{Preliminaries} \label{Preliminaries}
\paragraph{Notation}
We use $\sZ$ to represent dataspace. $p$ is the model dimension, i.e., the number of parameters in the model, $k$ is a parameter we will use to denote the dimension of the lower-dimensional space we choose. $m$ refers to the number of examples in a batch. We use superscripts and subscripts interchangeably to denote private or public data, like $x_{priv}$, $V^{pub}$.

\paragraph{Definition 1 (Differential Privacy \citep{dwork-dp})} 
\textit{A randomized algorithm $\mathcal{A}$ is \emph{$(\epsilon, \delta)$-differential private} if for any pair of datasets D, D' that differ in exactly one data point and for all subsets $\sS$ of outputs, we have:}
$$
\Pr[\mathcal{A}(D) \in \sS] \le e^{\epsilon}\Pr[\mathcal{A}(D') \in \sS] + \delta.
$$

\paragraph{Definition 2 (Principal Angles \citep{principleangles})}
\textit{Let $V_1$ and $V_2$ be two orthonormal matrices of $\mathbb{R}^{p \times k}$. The \emph{principal angles} $0 \le \theta_1 \le \cdot\cdot\cdot \le \theta_k \le \pi / 2$ between two subspaces span($V_1$) and span($V_2$), are defined recursively by}
$$
    \cos\theta_k = \max\limits_{\mathbf{u_k} \in span(V_1)} \max\limits_{\mathbf{v_k} \in span(V_2)} \mathbf{u_k'v_k}, \ \ \textit{subject to} 
$$
\begin{equation*}
   \mathbf{u_k'u_k} = 1,  \mathbf{v_k'v_k} = 1,
   \mathbf{u_k'u_i} = 0, \mathbf{v_k'v_i} = 0, i=1, ...,k-1
\end{equation*}

That is, the first principal angle $\theta_1$ is the smallest angle between all pairs of unit vectors over two subspaces, the second $\theta_2$ is the second smallest angle, and the rest are similarly defined.

\paragraph{Definition 3 (Projection Metric \citep{projectionmetric, projectionmetric2})}
\label{projmetricdef}
\textit{The \emph{projection metric} between two $k$-dimensional subspaces $V_1$, $V_2$ is defined as:}
$$
d\left(V_1, V_2\right)=\left(\sum_{i=1}^k \sin ^2 \theta_i\right)^{1 / 2}=\left(k-\sum_{i=1}^k \cos ^2 \theta_i\right)^{1 / 2}
$$
\textit{where the $\theta_i$'s are the principal angles between $V_1$ and $V_2$}.


\paragraph{Projected DP-SGD} Our theoretical analysis is based on Projected DP-SGD~\cite{bypass}, a private learning algorithm that leverages public data for gradient preconditioning. 
We note that Projected DP-SGD is the leading and to the best of our knowledge, the very first approach among private ML algorithms that use public data for preconditioning. 
Hence, for theoretical analysis, we focus entirely on Projected DP-SGD.
Here we briefly introduce their algorithm. Projected DP-SGD makes one modification to standard DP-SGD. At each step, it projects the noisy gradient vector onto the $k$-dimensional subspace defined by $\Pi^{pub}_k$, i.e., $\noisy{\vg}_t := \Pi^{pub}_k\noisy{\vg}_t$. The full algorithm is in Appendix \ref{misspre}.

\paragraph{Gradient Embedding Perturbation (GEP)}
To demonstrate the generalizability of our theoretical analysis on Projected DP-SGD, our empirical evaluation is based on GEP~\cite{donot}, another preconditioning-based private learning algorithm that leverages public data. Here we briefly introduce their algorithm. GEP involves three steps: 1) it computes a set of the orthonormal basis for the lower-dimensional subspace; 2) GEP projects the private gradients to the subspace derived from step 1, thus dividing the private gradients into two parts: embedding gradients that contain most of the information carried by the gradient, and the remainder are called residual gradients; 3) GEP clips two parts of the gradients separately and perturbs them to achieve differential privacy. The full algorithm is in Appendix \ref{misspre}.

\section{Gradient Subspace Distance}  \label{sec:GSD}
Let's consider a private supervised learning task on a private dataset defined on a private data domain $\sZ$, i.e., $D^{priv} = (z_1, z_2, \dots, z_n)$, where $z_i \in \sZ$, with a differentially private learning algorithm $\mathcal{A}$ that can leverage public data to improve model utility. We have a collection of potential choices of public datasets $[D_1^{pub}, D_2^{pub}, \cdots]$. We would like a metric that, when computed for a public dataset $D^{pub}$, is well-correlated with utility when $D^{pub}$ is used as the public dataset with algorithm $\mathcal{A}$ on private task $D^{priv}$.
This serves two purposes: first, it gives a quantitative metric that can formalize dataset similarity and suitability for use as a public dataset.
Second, it may be useful in actually \emph{selecting} a public dataset for use for a particular private task, which is a critical hyperparameter.

\begin{algorithm}[htbp]
    \caption{Gradient Subspace Distance (GSD)}
    \label{myalgo}
    \textbf{Input:} $m$ private samples from $D^{priv}$: $x_{priv}$, $m$ public samples from $D^{pub}$: $x_{pub}$, model weights $\mathbf{w}_0$, loss function $\mathcal{L}$,  dimension $k$ \\
    \textbf{Output:} Distance between two image datasets $\boldsymbol{d}$
    \begin{algorithmic}[1]
        \STATE \texttt{// Compute per-sample gradient matrix for private and public examples}
        \STATE $G_{priv} = \nabla \mathcal{L}(\mathbf{w}_0, x_{priv})$
        \STATE $G_{pub} = \nabla \mathcal{L}(\mathbf{w}_0, x_{pub})$
        \STATE \texttt{// Compute top-$k$ subspace of the gradient matrix}
        \STATE $U^{priv}, S^{priv}, V^{priv} \leftarrow \mathbf{SVD}(G_{priv})$
        \STATE $U^{pub}, S^{pub}, V^{pub} \leftarrow \mathbf{SVD}(G_{pub})$
        \STATE \texttt{// Compute the distance between two subspaces}
        \STATE $\boldsymbol{d} = \mathbf{ProjectionMetric}(V_{k}^{priv}, V_{k}^{pub})$
    \end{algorithmic}
\end{algorithm}


We present the pseudo-code of our algorithm, Gradient Subspace Distance (GSD) in Algorithm \ref{myalgo}. At a high level, our method involves the following two steps: finding the gradient subspace of the public and private data examples, and computing their gradient subspace distance. The algorithm uses the same model $A$ and a batch of randomly labeled data examples from private and public datasets. Following standard DP-SGD, the algorithm will first compute and store per-example gradients from each data example, that is $G_{priv},  G_{pub} \in \mathbb{R}^{m \times p}$. Then it computes the top-$k$ singular vectors of both the private and public gradient matrix by performing singular value decomposition (SVD). 
Finally we use projection metric to derive the subspace distance $\boldsymbol{d}$ by taking the right singular vectors $V_{k}^{pub}, V_{k}^{priv}$ from the previous step.

GSD is naturally suited to the aforementioned pre-conditioning methods.
In each iteration, these methods project the private gradients to a low-dimensional subspace, which ideally contains most of the signal of the gradients (Section~\ref{sec:lowerdim}).
Since repeatedly selecting the top subspace of the gradients themselves is not a privacy-preserving operation, we instead choose a public dataset to use as a proxy.
Thus intuitively, a public dataset with a ``similar top subspace'' should be suitable.
This is what GSD tries to capture, and the best dataset should be the one with minimum GSD. We provide theoretical analysis on top of that in Section~\ref{riskanalysis}.

However, following this intuition only gets us so far: taking it literally would measure distances between the public and private datasets at each step throughout the training process, an impractical procedure that would introduce significant overhead. 
Remarkably, we instead find that a simple alternative is effective: compute the distance only once at initialization (Section~\ref{sec:uniform-distance}). 
This requires only a single minibatch of each dataset, and as we show in our experiments, is surprisingly robust to changes in model architecture (Section~\ref{sec:transferable}). 
Most importantly, we show that it is also effective for \emph{transfer learning} settings (Section~\ref{sec:sppt}), where subspace projections are not used at all, thus demonstrating that GSD more generally captures dataset similarity and fitness-for-use of public datasets.

\subsection{Gradients are in a Lower-dimensional Subspace} \label{sec:lowerdim}
As shown in a long line of work~\cite{donot, bypass, subspace3, subspace1} both theoretically and empirically, the stochastic gradients (approximately) stay in a lower-dimensional subspace of the $p$-dimensional gradient space during training. 

\begin{figure}[!h]
    \centering
    \subfloat[CIFAR-10]{\includegraphics[width=0.5\linewidth]{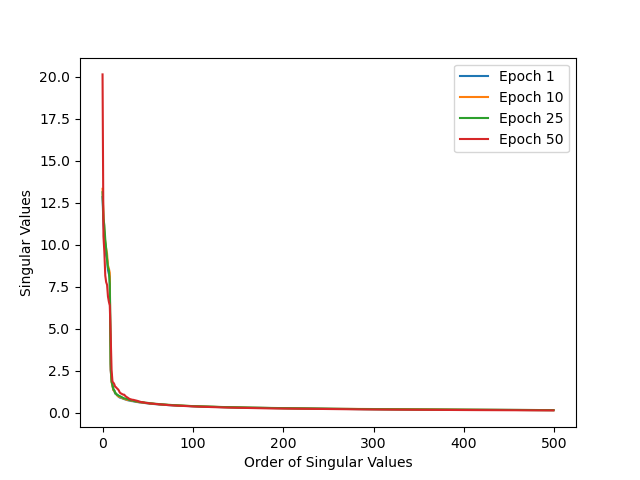}}
    \hfil
    \subfloat[ChestX-ray14]{\includegraphics[width=0.5\linewidth]{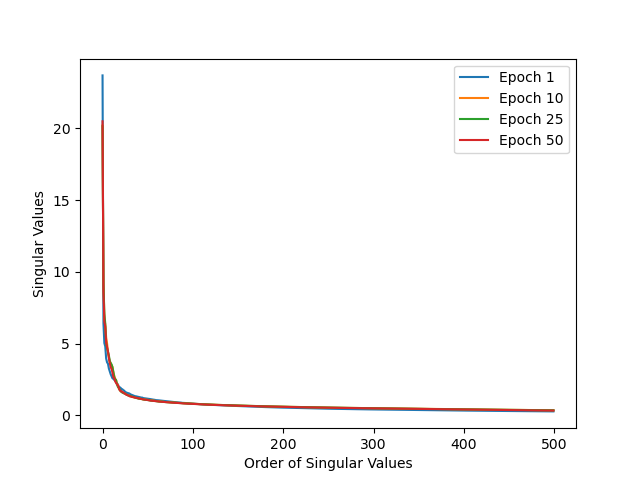}}
    \caption{Top 500 singular values in the training procedure using vanilla SGD. Model architectures are in the Section~\ref{sec:exp-setting}. Only a small fraction of singular values are extremely large while the rest are close to 0, meaning that most of the gradients lie in a lower-dimensional subspace, which corresponds to the top singular vectors.}
    \label{fig:eigenvalue}
\end{figure}

In this section, we empirically reconfirm that using the top subspace of the gradients themselves contains most of their signal. We evaluate the empirical observation that the stochastic gradients stay in a lower-dimensional subspace during the training procedure of a deep learning model \citep{subspace1, subspace2}, as shown in~\Cref{fig:eigenvalue}. Results show that only a tiny fraction of singular values are enormous. At the same time, the rest are close to 0, meaning that most of the gradients lie in a lower-dimensional subspace, corresponding to the top singular vectors.

\subsection{Excess Risk Scales with GSD} \label{riskanalysis}
Now we theoretically prove that the excess risk of a pre-conditioning public-data-assist private machine learning algorithm, e.g., Projected DP-SGD~\citep{donot}, is bounded by Gradient Subspace Distance (GSD) under standard statistical learning assumptions. In this section, we first show that the excess risk is largely determined by the reconstruction error $\|\mR\|_F = \|\mathbf{G}_{p r i v}-\mathbf{G}_{p r i v} \Pi^{pub} \|_F$ at each step. Then we show that GSD bounds the reconstruction error.

\begin{theorem}[\textbf{Excess Risk}]
\label{theorem1}
For Projected DP-SGD \cite{bypass}, assume that the loss $L(\mathbf{w})$ is $L$-Lipschitz, convex, and $\beta$-smooth, and the population gradient matrix is rank-$k$. Let $\vw^*$ be the minima of $L(\vw)$, with $(\epsilon, \delta)$-DP, $T = O(n^2\epsilon^2)$, step size $\eta_t=1/\sqrt{T}$, the excess risk of  Projected DP-SGD obeys
\begin{align*}
       \E[L(\overline{\vw})]-L(\vw^*) \leq O\left(\frac{kL^2}{n\epsilon}\right) + O(L\Lambda)
\end{align*}
where $\Lambda=\sum_t\frac{\|\mR_t\|_F}{\sqrt{\sum_{i}s_{i, t}^2}}$, $\mR_t = \mG^{priv}_t - \mG^{priv}_t\Pi^{pub}_{k, t}$ is the reconstruction error at step $t$, and $s_{1, t} \geq \dots \geq s_{k, t}$ is the singular values of the per-sample gradient matrix $\mG^{priv}_t$ at step $t$.
\end{theorem}

\begin{proof}
   At step $t$, recall that the reconstruction error $ \mathbf{R} = \mathbf{G}_{p r i v}-\mathbf{G}_{p r i v} \Pi_k^{pub} \in \R^{m \times p}$, let $\Delta_t = \Pi_k^{pub}\vg_t - \vg_t$, where $\vg_t = \sum_{i}\mR[i,:]$. Then we have:
   \begin{align*}
       \|\Delta_t\|_2 &= \|\Pi_k^{pub}\vg_t - \vg_t\|_2 \\
           &= \|\vg_t(\mI - \Pi_k^{pub})\|_2 \\
           &= \|\vg_t^{\perp}\|_2\\
           &\leq \|\vg_t\|_2\sin(\theta_{\max})
   \end{align*}
   where $\theta_{\max}$ is the largest principle angle between subspaces of $\mathbf{G}_{p r i v}$ and $\Pi_k^{pub}$.
   \begin{align*}
       \|\mR\|_F &= \|\mathbf{G}_{p r i v}-\mathbf{G}_{p r i v} \Pi_k^{pub}\|_F \\
           &= \|\mathbf{G}_{p r i v}(\mI - \Pi_k^{pub})\|_F  \\
           & \leq \|\mG_{p r i v}\|_F\|(\mI - \Pi_k^{pub})\|_2 \\
           &= \|\mG_{p r i v}\|_F\sin(\theta_{\max})
   \end{align*}
   Putting together, we have:
   \begin{align*}
       \|\Delta_t\|_2 &\leq \|\vg_t\|_2\frac{ \|\mR\|_F}{\|\mG_{p r i v}\|_F} \\
          &\leq \frac{L}{\sqrt{\sum_{i}s_i^2}}\|\mR_t\|_F
   \end{align*}
   Substitute back to Theorem 5 of \cite{bypass}, we have:
   \begin{align*}
       \E[L(\overline{\vw})]-L(\vw^*) \leq O\left(\frac{kL^2}{n\epsilon}\right) + O(L\frac{\|\mR_t\|_F}{\sqrt{\sum_{i}s_i^2}})
\end{align*}
That completes the proof.
\end{proof}

Theorem \ref{theorem1} shows that the excess risk is affected by the reconstruction error matrix $\mR$ at each step. A larger reconstruction error will result in a larger excess risk, which is often evaluated by the error rate in the experiments.

\begin{lemma}[\textbf{Reconstruction Error}]
\label{lemma1}
Let $\mR_t = \mG^{priv}_t - \mG^{priv}_t\Pi^{pub}_{k,t}$ be the reconstruction error at step $t$, then we have:
\begin{align*}
    \left\| \mathbf{R}_t \right\|_F \le \sqrt{2}s_{1, t}\mathbf{GSD}_t + \sum_{i=k+1}^p s_{i,t}
\end{align*}
where $s_{1, t} \ge ...\ge s_{k, t} \ge...$ are the singular values of $\mathbf{G}_{p r i v}$, $\mathbf{GSD}_t = \mathbf{GSD}(x_{pub}, x_{priv}; \vw_t)$ is the gradient subspace distance given by our algorithm when the gradients are taken at $\vw_t$.
\end{lemma}

\begin{proof} 
Note that all notations in this proof represent variables at step $t$. We may simplify the notation by omitting $t$. 

\begin{align*} 
    \mathbf{R} &= \mathbf{G}_{p r i v}-\mathbf{G}_{p r i v} \Pi_k^{pub} \\
    &= \mathbf{G}_{p r i v} - \mathbf{G}_{p r i v}\Pi_k^{priv} + \mathbf{G}_{p r i v}\Pi_k^{priv} - \mathbf{G}_{p r i v} \Pi_k^{pub}
\end{align*}
\begin{align*}
\Rightarrow \quad\left\| \mathbf{R} \right\|_F^2 \le (\underbrace{\left\| \mathbf{G}_{p r i v} (\Pi_k^{pub} - \Pi_k^{priv}) \right\|_F}_{D_1} \\
+ \underbrace{\left\| \mathbf{G}_{p r i v}\left(\mathbb{I} - \Pi_k^{priv}\right)\right\|_F}_{D_2})^2
\end{align*}
where $\Pi_k^{pub} = V_k^{p u b} V_k^{p u b \top}$ denotes the orthogal projection to the subspace of span($V_k^{p u b}$), $\Pi_k^{priv} = V_k^{priv} V_k^{priv \top}$ denotes the orthogal projection to the subspace of span($V_k^{priv}$).

For $D_2$, recall that the Eckart–Young–Mirsky theorem \citep{young} shows that the best rank-$k$ approximation of $\mathbf{G}_{p r i v}$ is given by its top-$k$ reconstruction using SVD. Therefore, we have

\begin{align*}
    D_2 &= \left\| \mathbf{G}_{p r i v}\left(\mathbb{I} - \Pi_k^{priv}\right)\right\|_F \\
    & = \left\|\sum_{i=1}^p s_i u_i v_i^{\top}-\sum_{i=1}^k s_i u_i v_i^{\top}\right\|_F \\
    & =\left\|\sum_{i=k+1}^p s_i u_i v_i^{\top}\right\|_F \\
    & = \sum_{i=k+1}^p s_i
\end{align*}

For $D_1$, the definition of projection metric (Definition~\ref{projmetricdef}) shows that
\begin{align*}
    \mathbf{GSD}^2 & = k - (\cos^2\theta_1 + ... + \cos^2\theta_k) \\
    & \stackrel{(a)}{=} k - \operatorname{Tr}\left( V_k^{priv \top}V_k^{priv}V_k^{p u b \top} V_k^{p u b}\right) \\
    & \stackrel{(b)}{=} \frac{1}{2} \left\| \Pi_k^{pub} - \Pi_k^{priv} \right\|_F^2
\end{align*}

(a) and (b) hold according to Equation 5.4 of \cite{projmetric3}.

Therefore, we have
\begin{align*}
    D_1 & = \left\| \mathbf{G}_{p r i v} (\Pi_k^{pub} - \Pi_k^{priv}) \right\|_F \\
    & \le \left\| \mathbf{G}_{p r i v} \right\|_2 \left\| \Pi_k^{pub} - \Pi_k^{priv} \right\|_F \\
    & = s_{1} \left\| \Pi_k^{pub} - \Pi_k^{priv} \right\|_F \\
    & = \sqrt{2}s_{1} \mathbf{GSD}
\end{align*}

Combining $D_1$ and $D_2$, we have
\begin{align*}
\left\| \mathbf{R} \right\|_F & \le D_1 + D_2 \\
& = \sqrt{2}s_{1} \mathbf{GSD} + \sum_{i=k+1}^p s_i
\end{align*}

Hence we know that GSD bounds the reconstruction error at step $t$.
\end{proof}

Lemma \ref{lemma1} indicates that the reconstruction error $\|\mR_t\|$ of the private gradient matrix using public examples at step $t$ is bounded by GSD, the distance between the public and private gradient subspaces. A larger GSD may yield a larger reconstruction error at each step. Combined with Theorem \ref{theorem1},  it demonstrates that the GSDs across all training steps bounds the excess risk.

\subsection{Ordering of GSD is Preserved over Training}\label{sec:uniform-distance}
In the previous section, we first show that the excess risk can be predicted by the reconstruction error at each step $t$. Then we show that GSD bounds the reconstruction error at every step over training. However, this is impractical due to significant privacy leakage and computational overhead, as it requires repeatedly computing GSD at each step using the whole private dataset. 

In this section, we show that GSD can be practically utilized as a one-time computation: we compute GSD only once at initialization, and that GSD remains predictive. We theoretically prove this by: 1) first we show that GSD is almost uniform over training, i.e., GSDs largely remain constant over $t=1, \dots, T$; 2) when meeting certain sample complexity, subspace computed upon $m$ data samples can well approximate the underlying population subspace, thus \textit{fresh} private samples are no longer required to measure the gradient subspace.

\begin{theorem}[\textbf{Almost Uniform}]  \label{thm:uniform}
For Projected DP-SGD \cite{bypass}, assume that the loss $L(\mathbf{w})$ is $L$-Lipschitz, and $\beta$-smooth, the private gradient matrix at step $t$ is at most rank $r$, we have:
\begin{align*}
    &\E[|\mathbf{GSD}_{t} - \mathbf{GSD}_0|] \\
    &\leq \sqrt{r/2}\beta\sqrt{(L^2 + k\sigma^2)}\sum_{i=1}^t(\frac{\eta_t}{\alpha^{pub}_{k, t}} + \frac{\eta_t}{\alpha^{priv}_{k, t}})
\end{align*}
$\eta_t$ is the learning rate at step $t$, $k$ is subspace dimension, $\sigma$ is noise scale, $\alpha^{priv}_{k, t} = s^{priv}_{k, t} - s^{priv}_{k+1, t}$ is the singular value gap between $k$-th and $k+1$-th of per-sample gradient matrix $\mG^{priv}_t$ at step $t$, $\mathbf{GSD}_t = \mathbf{GSD}(x_{pub}, x_{priv}; \vw_t)$, $\mathbf{GSD}_0 = \mathbf{GSD}(x_{pub}, x_{priv}; \vw_0)$.
\end{theorem}

\begin{proof}
    At step $t$, the private gradient matrix is composed by per-sample gradient vector, i.e., 
    \begin{align*}
        \mG^{priv}_t = [\vg_1(\vw_t), \dots, \vg_m(\vw_t)]^T
    \end{align*}
    so that we know the Frobenius norm difference of gradient matrix at step $t$ and $t+1$ is:
    \begin{align*}
        \|\mG^{priv}_{t+1} - \mG^{priv}_t\|_F &\leq \sqrt{r}\|\vg(\vw_{t+1}) - \vg(\vw_{t})\|_2 \\
         &\leq \sqrt{r}\beta\|\vw_{t+1} - \vw_{t}\|_2
    \end{align*}
    Given the update rule of Projected DP-SGD, and take the expectation, we have:
    \begin{align*}
        \E[\|\vw_{t+1} - \vw_{t}\|_2] &= \E[\|\eta_t\noisy{\vg}_t\|_2] \\
          &= \eta_t\E[\|\Pi_{k, t}^{pub}(\vg_t + \vz_t)\|_2] \\
          &= \eta_t\E[\|\Pi_{k, t}^{pub}(\vg_t + \vz_t)\|_2] \\
          &\leq \eta_t\sqrt{(L^2 + k\sigma^2)}
    \end{align*}
    where $\vz \sim \N(0, \sigma^2\mI)$. So that we know:
    \begin{align*}
        \E[\|\mG^{priv}_{t+1} - \mG^{priv}_t\|_F] 
         & \leq \sqrt{r}\beta\E[\|\vw_{t+1} - \vw_{t}\|_2] \\
         & \leq \sqrt{r}\beta\eta_t\sqrt{(L^2 + k\sigma^2)}
    \end{align*}
    Applying Davis-Kahan theorem \cite{daviskahan}, we have:
    \begin{align*}
        \|\Pi^{pub}_{k, t+1} - \Pi^{pub}_{k, t}\|_F \leq \frac{\|\mG^{pub}_{t+1} - \mG^{pub}_t\|_F}{\alpha^{pub}_{k, t}} \\
        \|\Pi^{priv}_{k, t+1} - \Pi^{priv}_{k, t}\|_F \leq \frac{\|\mG^{priv}_{t+1} - \mG^{priv}_t\|_F}{\alpha^{priv}_{k, t}}
    \end{align*}
    where $\alpha^{priv}_{k, t} = s^{priv}_{k, t} - s^{priv}_{k+1, t}$ is the singular value gap of $\mG^{priv}_t$, etc. From Lemma \ref{lemma1}, we know that:
    \begin{align*}
        \mathbf{GSD}_t^2 = \frac{1}{2}\|\Pi^{pub}_{k, t} - \Pi^{priv}_{k, t}\|_F^2
    \end{align*}
    Putting together, we have:
    \begin{align*}
        &\E[|\mathbf{GSD}_{t+1} - \mathbf{GSD}_t|] \\
          &= \frac{\sqrt{2}}{2}\E[|\|\Pi^{pub}_{k, t+1} - \Pi^{priv}_{k, t+1}\|_F - \|\Pi^{pub}_{k, t} - \Pi^{priv}_{k, t}\|_F|] \\
          &\leq \frac{\sqrt{2}}{2} \E[\|(\Pi^{pub}_{k, t+1} - \Pi^{pub}_{k, t})-(\Pi^{priv}_{k, t+1} - \Pi^{priv}_{k, t})\|_F] \\
          &\leq \frac{\sqrt{2}}{2}\left(\E[\|\Pi^{pub}_{k, t+1} - \Pi^{pub}_{k, t})\|_F] + \E[\|\Pi^{priv}_{k, t+1} - \Pi^{priv}_{k, t})\|_F]\right) \\
          &\leq \frac{\sqrt{2}}{2}\left(\frac{\E[\|\mG^{pub}_{t+1} - \mG^{pub}_t\|_F]}{\alpha^{pub}_{k, t}} + \frac{\E[\|\mG^{priv}_{t+1} - \mG^{priv}_t\|_F]}{\alpha^{priv}_{k, t}}\right) \\
          &\leq \frac{\sqrt{2}\sqrt{r}\beta\eta_t\sqrt{(L^2 + k\sigma^2)}(\alpha^{pub}_{k, t} + \alpha^{priv}_{k, t})}{2\alpha^{pub}_{k, t}\alpha^{priv}_{k, t}}
    \end{align*}
    Summing over step $1, \dots, t$, we have:
    \begin{align*}
        &\E[|\mathbf{GSD}_{t} - \mathbf{GSD}_0|] \\
        &\leq \sqrt{r/2}\beta\sqrt{(L^2 + k\sigma^2)}\sum_{i=1}^t(\frac{\eta_t}{\alpha^{pub}_{k, t}} + \frac{\eta_t}{\alpha^{priv}_{k, t}})
    \end{align*}
    That completes the proof.
\end{proof}

Theorem \ref{thm:uniform} indicates that the expected difference between GSD at initialization and at step $t$, is bounded by small constants and $\alpha_t$, the gap between $k$-th and $k+1$-th singular values. As aforementioned, the stochastic gradients (approximately) stay in a lower-dimensional subspace of the $p$-dimensional gradient space during training. Then the singular value gap $\alpha_t$ is significant thus $1/\alpha_t$ is small and remains almost a constant over training steps. Therefore, Theorem \ref{thm:uniform} shows us GSD is almost uniform during training.

To this end, we are only one step away from theoretically justifying that GSD serves well as a one-time computation. In Theorem \ref{thm:uniform}, we prove that $\mathbf{GSD}_t = \mathbf{GSD}(x_{pub}, x_{priv}; \vw_t)$ does not differ too much from $\mathbf{GSD}_0$. However, in the training process, each step involves a \textit{fresh} batch of private training samples, meaning the actual GSD calculation during training is $\mathbf{GSD}_t = \mathbf{GSD}(x_{pub}, x^{priv}_t; \vw_t)$. Note that $x^{priv}_t$ is the private training batch at step $t$, and $x_{priv}$ is a separate minibatch of private samples, which is used to compute one-time GSD. $x_{priv}$ and $x^{priv}_t$ come from the same data domain $\sZ$, but do not have overlapping data points. We prove that this difference caused by sample freshness will only incur a negligible term to Theorem \ref{thm:uniform}, thus GSD computed at initialization can effectively approximate GSD computed at training step $t$.

\begin{theorem}[\textbf{Sample Complexity}]  \label{thm:samplecomplexity}
    For Projected DP-SGD \cite{bypass}, assume that the loss $L(\mathbf{w})$ is $L$-Lipschitz, convex and $\beta$-smooth. Then with $|x_{priv}| = |x_t^{priv}| \geq 800k\ln{(k/16\gamma_t^2)}$, we have:
    \begin{align*}
        &\E\left[|\mathbf{GSD}_0(x_{pub}, x_{priv}; \vw_0) - \mathbf{GSD}_t(x_{pub}, x_{t}^{priv}; \vw_t)|\right] \\
        &\leq O(\sqrt{\log{\gamma_t}}) + \beta\sqrt{\frac{r(L^2 + k\sigma^2)}{2}}\sum_{i=1}^t(\frac{\eta_t}{\alpha^{pub}_{k, t}} + \frac{\eta_t}{\alpha^{priv}_{k, t}})
    \end{align*}
    where $\gamma_t = \frac{\sqrt{\sum_{i=k+1}^n{s^2_{i,t}}}}{s_{k,t}}$, $s_{1,t} \geq s_{2,t} \geq \dots \geq s_{k,t} \geq \dots \geq s_{n,t}$ are the singular values of population gradient matrix at step $t$.
\end{theorem}

\begin{proof}
    Continuing the notations in Theorem \ref{thm:uniform}, we have:
    \begin{align*}
        &\left|\|\Pi^{pub}_{k, t} - \Pi^{priv}_{k, t}\|_F - \|\Pi^{pub}_{k, t} - \Pi^{t}_{k, t}\|_F\right| \\
        & \leq \|\Pi^{priv}_{k, t} - \Pi^{t}_{k, t}\|_F
    \end{align*}
    (Note the difference between $\Pi^{priv}_{k, t}$ and $\Pi^{t}_{k, t}$. $\Pi^{priv}_{k, t}$ is computed upon $x_{priv}$ --- the private samples we used for calculating GSD, which are exactly the input private samples of Algorithm \ref{myalgo}. On the other hand, $\Pi^{t}_{k, t}$ is computed upon $x_t$, the minibatch at step $t$ when private training. $x_{priv}$ and $x_t$ come from the same private dataset, e.g., CIFAR-10, but do not have overlapping samples.)
    
    Applying Lemma \ref{thm:prob2exp} to Lemma \ref{thm:dpsubspace}, letting $B = 2\sqrt{2}$, $A = 4\gamma_t^2$ we have:
    \begin{align*}
        \E[\|\Pi_{k, t}^1 - \Pi_{k, t}^2\|_F] \leq O(\sqrt{\log{\gamma_t}})
    \end{align*}
    Combine it with Theorem \ref{thm:uniform}, we know:
    \begin{align*}
        &\E[\|\Pi^{pub}_{k, 0} - \Pi^{priv}_{k, 0}\|_F - \|\Pi^{pub}_{k, t} - \Pi^{t}_{k, t}\|_F] \\
        &\leq \E[\left|\|\Pi^{pub}_{k, 0} - \Pi^{priv}_{k, 0}\|_F - \|\Pi^{pub}_{k, t} - \Pi^{priv}_{k,t}\|_F\right|  \\
        &\quad + \left|\|\Pi^{pub}_{k, t} - \Pi^{priv}_{k, t}\|_F - \|\Pi^{pub}_{k, t} - \Pi^{t}_{k, t}\|_F \right| ] \\
        & \leq \E[\|\Pi^{priv}_{k, t} - \Pi^{t}_{k, t}\|_F] + \sqrt{2}\E[|\mathbf{GSD}_{t} - \mathbf{GSD}_0|]\\
        & \leq O(\sqrt{\log{\gamma_t}}) + \beta\sqrt{\frac{r(L^2 + k\sigma^2)}{2}}\sum_{i=1}^t(\frac{\eta_t}{\alpha^{pub}_{k, t}} + \frac{\eta_t}{\alpha^{priv}_{k, t}})
    \end{align*}
    which is:
    \begin{align*}
        &\E\left[|\mathbf{GSD}_0(x_{pub}, x_{priv}) - \mathbf{GSD}_t(x_{pub}, x_{t})|\right] \\
        &\leq O(\sqrt{\log{\gamma_t}}) + \beta\sqrt{\frac{r(L^2 + k\sigma^2)}{2}}\sum_{i=1}^t(\frac{\eta_t}{\alpha^{pub}_{k, t}} + \frac{\eta_t}{\alpha^{priv}_{k, t}})
    \end{align*}
    That completes the proof.
\end{proof}

Theorem \ref{thm:samplecomplexity} demonstrates that GSD computed at initialization with a separate private batch $x_{priv}$, is close to GSD at training step $t$ with training batch $x_t^{priv}$. Thus we can effectively estimate $\mathbf{GSD}_t$ by $\mathbf{GSD}_0$. In~\Cref{fig:distance}, we empirically measure the GSD for each of the public datasets throughout the training process,
and shows that the relative ordering of the distances is almost always preserved. 

\begin{figure}[!h]
    \centering
    \subfloat[Distance in 1 Epoch]{\includegraphics[width=0.5\linewidth]{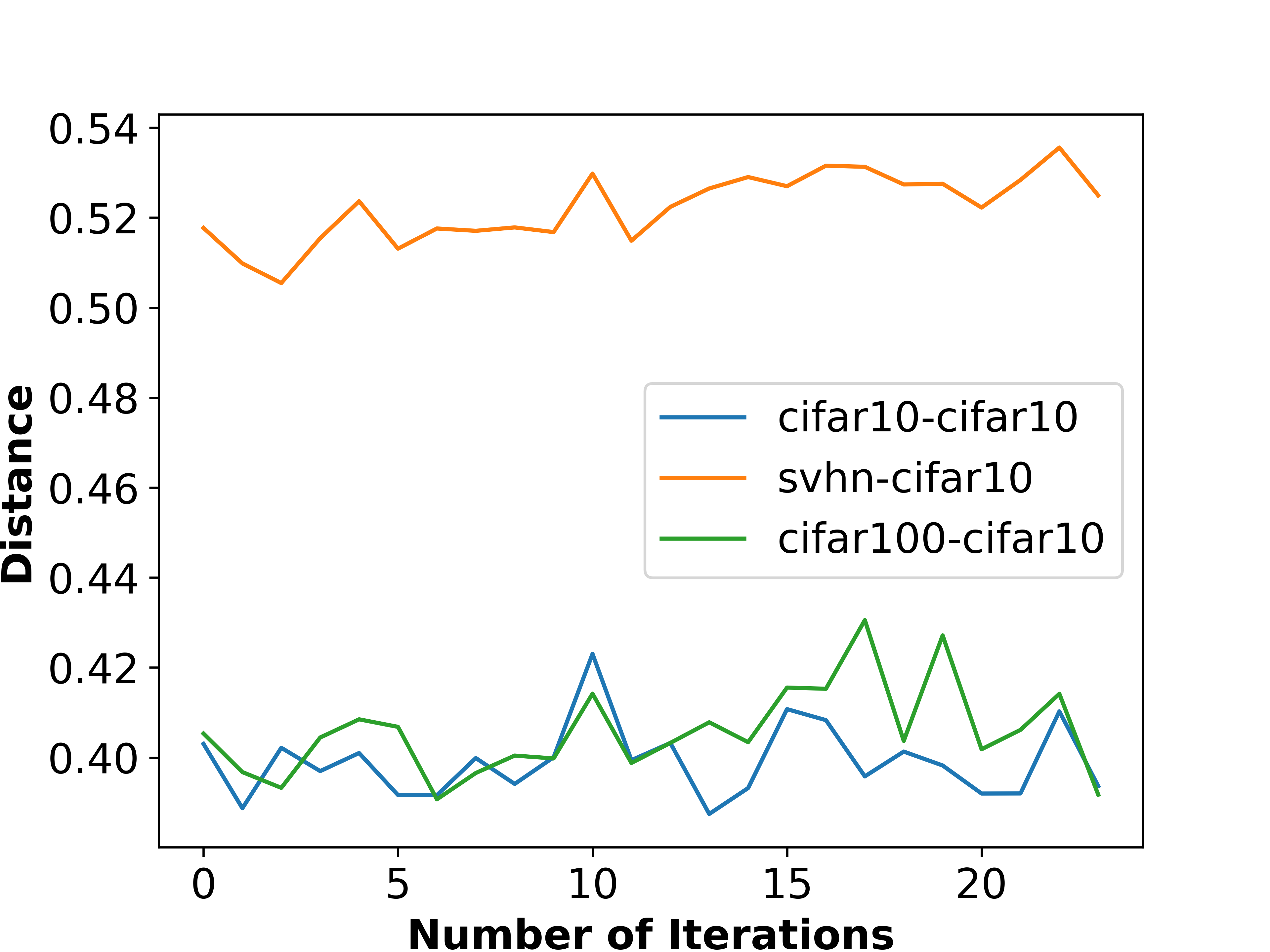}}
    \hfil
    \subfloat[Distance over 100 Epoch]{\includegraphics[width=0.5\linewidth]{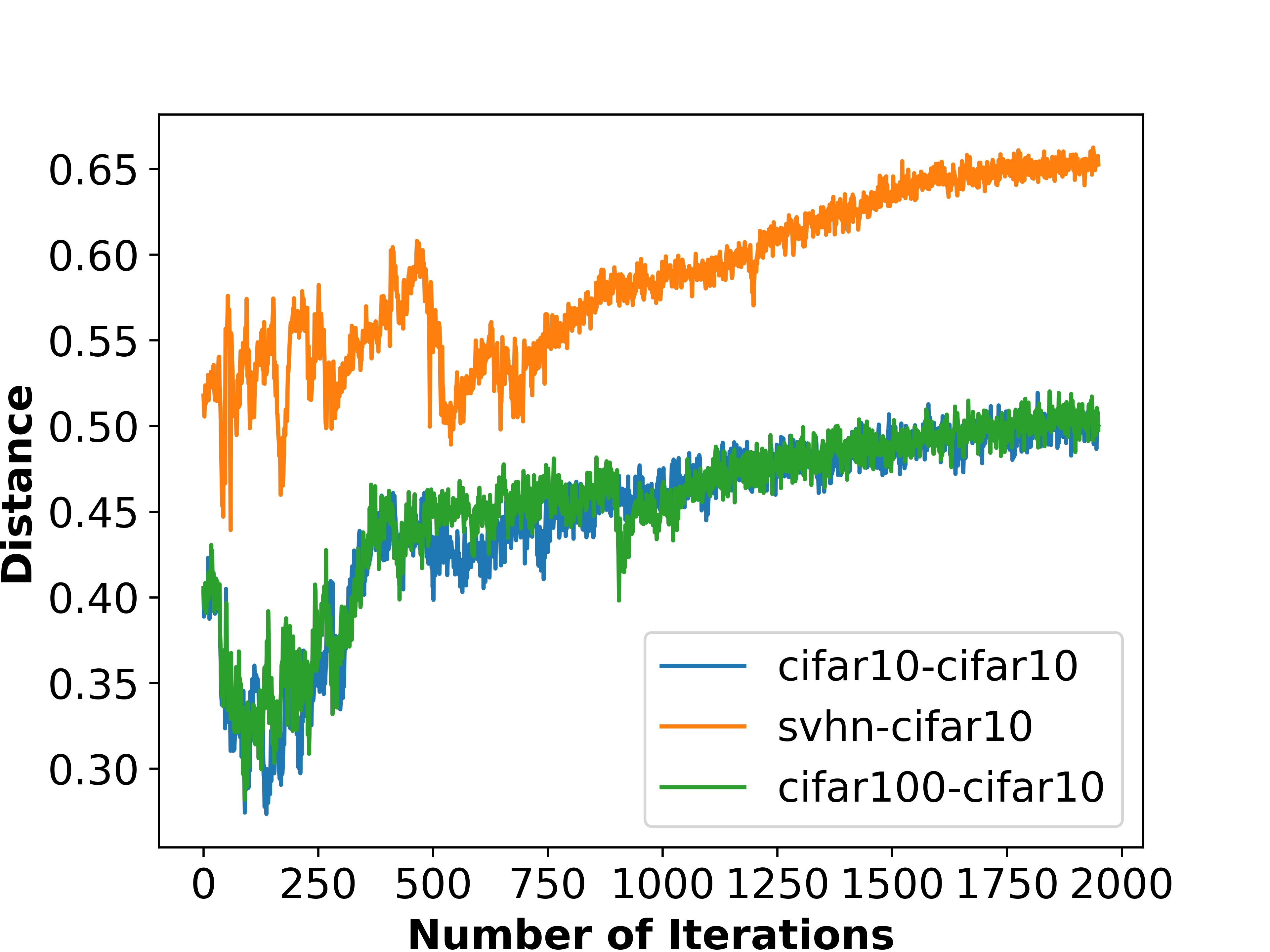}}
    \caption{The trend of distance when training a ResNet20 model on CIFAR-10 using vanilla SGD. We follow a standard SGD training procedure and compute the distance between the current private batch and public examples at each iteration.}
    \label{fig:distance}
\end{figure}

Combining this with the conclusions from the previous section, our analysis in Section \ref{riskanalysis} and \ref{sec:uniform-distance} shows that a one-time computed GSD remains predictive of the excess risk, which is typically measured by the error rate on the test set. Thus the ordering of GSD is well correlated with the utility of potential choices of public datasets.

\subsection{Transferable across Model Architectures}\label{sec:transferable}
In the previous section, we theoretically prove that GSD is almost uniform during training, meaning GSD does not depend on $\vw_t$, the point with it takes the gradient. A natural follow-up question arises: \textit{Is the ordering of GSD also independent of the model architecture?} Our experiments confirm that, as in Section \ref{sec:transfer}, we empirically found that GSD is surprisingly robust to changes in model architecture. This feature allows for the use of simpler architectures as a substitute, enabling efficient computation of GSDs that are still effective for training large models. Further, this feature implies that in practice, GSD estimation can be decoupled with specific private training tasks. However, we only validate this transferability feature empirically. We leave the theoretical validation as an open question.

\section{Second-Phase Pre-training}
The standard method of private transfer learning consists of two phases: pre-training on a public dataset and fine-tuning on a private task.
However, with large training sets and models, the computational burden of pre-training is prohibitive for most practitioners. 
Consequently, it is common to instead use pre-trained weights (obtained through pre-training on a fixed dataset) rather than run pre-training on a public dataset of choice. While this is computationally convenient, it limits the choice of pre-training datasets, and thus limits the accuracy in downstream fine-tuning.

\begin{figure*}[!h]
    \begin{equation*}
    \underbrace{f(\mathbf{W}_0; x) \xrightarrow[\mW]{\mathbf{X}^{pub, large}} f(\mathbf{W}_{pt}; x)}_{\texttt{Load pre-trained weights}} \underbrace{\xrightarrow[\theta]{X_{pub, small}} f(\mathbf{W}_{pt}, \theta_0; x)}_{\texttt{Second-phase pre-training}} \underbrace{\xrightarrow[\theta]{X_{priv}} f(\mathbf{W}_{pt}, \hat{\theta}; x)}_{\texttt{Private fine-tuning}}
\end{equation*}
\caption{Second-phase pre-training pipeline. We first choose a large model and download its pre-trained weights, $\mW_{pt}$ (first-phase pre-training). Then we use an appropriate parameter-efficient fine-tuning mechanism to get trainable parameters $\theta$, a subset of $\mW$, and train $\theta$ on a small but relevant public dataset to get $\theta_0$ (\textbf{\textit{second-phase pre-training}}). Finally, we pass $\theta_0$ as the initial weights for private fine-tuning on the target task.}
\label{second-phase}
\end{figure*}

To alleviate this issue, we consider \emph{second-phase pre-training}, in which a set of pre-trained weights are pre-trained on a second public dataset.
We can then (privately) fine-tune the model on a sensitive dataset for the downstream task of interest.
While this paradigm has previously been considered in the non-private setting~\cite{dontstop}, to the best of our knowledge, we are the first to explore second phase pre-training in the differentially private setting. 
Pre-trained models may be significantly out of distribution with respect to the downstream task.
Due to the noise introduced, the ability to adapt during fine-tuning may be diminished under differential privacy. 
Thus, the additional public data may be valuable for reducing the distribution shift. 
Second-phase pre-training is illustrated in Figure \ref{second-phase}. 

\subsection{Second-Phase Pre-training Step by Step}
Now we formally define second-phase pre-training. Suppose $f(\mathbf{W}_{pt}; x)$ where $\mathbf{W}_{pt}$ denotes pre-trained weights and $x$ is input. To do second-phase pre-training, we first use a parameter-efficient fine-tuning mechanism and create new trainable parameters. Then we train these parameters on some public datasets. This step can be described by:
\begin{equation}
    f_{2pt}\left(\mathbf{W}_{pt}, \mathbf{\theta}; x_{pub}\right) \rightarrow \mathbf{\theta}_0
\end{equation}
where $x_{pub}$ are the public datasets and $\mathbf{\theta}$ are the new trainable parameters, which are of far lower dimensionality than $\mathbf{W}$. We get the parameter vector $\mathbf{\theta}_0$ after this second-phase pre-training step. Finally, we initialize $\mathbf{\theta} = \mathbf{\theta}_0$ and privately fine-tune it by running DPSGD on the private task:
\begin{equation}
    f_{ft}\left(\mathbf{W}_{pt}, \mathbf{\theta}_0; x_{priv}\right) \rightarrow \hat{\mathbf{\theta}}
\end{equation}
Our experiments show that second-phase pre-training can give additional accuracy improvements, even when we only have a small number of public data examples. Furthermore, our distance measurement GSD remains a good indicator for choosing good public data for the second phase pre-training.

\subsection{Parameter Efficiency in Private Fine-tuning}
In both private and non-private settings, approaches frequently depart from the default of fine-tuning all model weights. 
For example, one can freeze parameters and fine-tune only specific layers, or introduce new parameters entirely.
The resulting number of tunable parameters is almost always chosen to be smaller  than during pre-training, leading to \emph{parameter efficient} methods.
This can be beneficial in terms of portability and resource requirements, and the fine-tuned model utility frequently matches or compares favorably to full fine-tuning. 
Parameter efficiency may be further advantageous in the differentially private setting, as it reduces the magnitude of noise one must introduce (though findings on the downstream impact on utility remain inconclusive).
In the settings we consider, we will empirically find that parameter-efficient methods result in better utility.

In general, there are two ways of parameter-efficient fine-tuning. One approach is to select a subset of layers or parameters for fine-tuning. For instance, \cite{bias-term} proposed fine-tuning only the bias terms of a model, which is both computationally and parameter-efficient while retaining similar accuracy compared to other methods. Another study by \cite{firstlast} found that fine-tuning the first and last layers of a model consistently improves its accuracy.  The other approach is to freeze all existing parameters and add new trainable parameters during fine-tuning. Some examples include Adapter~\citep{adapter}, Compacter~\citep{compacter} and LoRA~\citep{lora}. \cite{YuNBGIKKLMWYZ22,LiTLH22} demonstrated that private fine-tuning using parameter-efficient methods on large language models can be both computationally efficient and accurate.

\section{Experiments}
We explore the predictive power of GSD in both pre-conditioning and transfer learning settings.
Specifically, we use GSD to choose a public dataset for GEP~\citep{donot} and DP Mirror Descent~\citep{dpmirror} (Section~\ref{sec:preconditioning})(representative of pre-conditioning methods) and second-phase pre-training (representative of transfer learning settings)(Section~\ref{sec:sppt}).
We use a variety of datasets, including Fashion MNIST~\citep{fmnist}, SVHN~\citep{svhn}, and CIFAR-10~\citep{cifar10}, as three canonical vision tasks. 
Based on the recommendations of~\cite{TramerKC22}, we also evaluate our methods on datasets closer to privacy-sensitive applications.
In particular, we also work with two medical image dataset: ChestX-ray14~\citep{chestxray} and HAM10000~\citep{ham}. A variety of datasets are chosen as public data respectively. 
We evaluate our algorithms using both CNN-based (e.g., ResNet152~\citep{resnet}, DenseNet121~\citep{densenet}) and Transformer-based (ViTs~\citep{vit}) architectures. 
A variety of parameter-efficient fine-tuning mechanisms are considered, including freezing layers and LoRA~\citep{lora}. Further details on our experimental setup appear in Section~\ref{sec:exp-setting}.

We compute GSD non-privately using Algorithm~\ref{myalgo}, for two reasons. First, as discussed in Section~\ref{sec:GSD}, the privacy leakage due to hyperparameter selection is considered to be minimal and often disregarded in private ML.
We thus treat selection via GSD similarly.
Indeed, we experimentally validate that GSD has minimal impact on privacy using membership inference attacks in Section~\ref{app:mia}.
Second, beyond being a tool for public dataset selection, it is interesting in its own right to understand which metrics determine dataset utility in transfer settings.

Ideally, we would like our distance measure GSD to be model agnostic: it should depend only the two datasets, not on any particular model. 
This is not the case, since, as stated, our algorithms take gradients of the two datasets on the model of interest. 
However, we show that GSD is highly robust to changes in model architecture (Section~\ref{sec:transfer}).
We evaluate GSD on a 2-layer CNN (which we call a ``probe network''), and show that relative ordering of GSDs is preserved, even though the architecture is far simpler than the models of interest. 

We also compare our algorithm with Task2Vec~\citep{task2vec} (Section~\ref{sec:task2vec}), which has a similar goal as GSD. At a high level, Task2Vec represents a task (i.e., dataset) by transforming it into a vector so that the similarity between different datasets can be prescribed by the distance between two vectors. Although experiments show that Task2Vec matches taxonomic relations for datasets like iNaturalist~\citep{inatural}, our empirical evaluation shows that it is outperformed by GSD in the differentially private setting. We also compare GSD with Optimal Transport Dataset Distance (OTDD) \citep{otdd} (Section~\ref{app:otdd}), another algorithm that measures dataset similarity. Details are presented in Appendix. 

\subsection{Experiments Setting} \label{sec:exp-setting}
\paragraph{Model Architecture} As to \textit{pre-conditioning} experiments, for Fashion MNIST, we use a simple convolutional neural network with around 26000 parameters as in Table \ref{simplecnn}. For SVHN and CIFAR-10, we use ResNet20 which contains roughly 260,000 parameters. Batch normalization layers are replaced by group normalization layers for different private training, aligning with GEP settings. For ChestX-ray14, we use ResNet152 which has been pretrained on ImageNet1k, a subset of the full ImageNet \cite{imagenet} dataset. We privately fine-tune its classification layer, which contains around 28,000 parameters. We use the same model architecture for subspace distance computation and GEP private training. As to \textit{second-phase} experiments, we evaluate ChestX-ray14, HAM10000 on ResNet152, DenseNet121, and ViT using various parameter-efficient fine-tuning techniques, we list them in Table \ref{ftmech}. We use a simple 2-layer CNN for the probe network, shown in Table \ref{probenet}. 

\begin{table}[!h]
    \caption{Self-designed model architectures.}
    \subfloat[Model architecture for Fashion MNIST]{
    \label{simplecnn}
    \begin{tabular}{ll}
        \toprule
        \multicolumn{1}{c}{Layer}     & \multicolumn{1}{c}{Parameters}                \\ 
        \hline
        \multicolumn{1}{c}{Conv2d} & \multicolumn{1}{c}{16 filters of 8x8, stride=2} \\
        \multicolumn{1}{c}{Maxpooling2d} & \multicolumn{1}{c}{stride=2}             \\
        \multicolumn{1}{c}{Conv2d} & \multicolumn{1}{c}{32 filters 4x4, stride=2}   \\
        \multicolumn{1}{c}{Linear} & \multicolumn{1}{c}{32 units}           \\
        \multicolumn{1}{c}{Softmax} & \multicolumn{1}{c}{10 units}  \\
        \bottomrule
    \end{tabular}
    }
    \quad
    \subfloat[Model architecture for Probe Network]{
    \label{probenet}
    \begin{tabular}{cc}
        \toprule
        Layer        & Parameters                 \\ \hline
        Conv2d       & 64 filters of 8x8, stride=5 \\
        Maxpooling2d & stride=2                   \\
        Conv2d       & 16 filters 4x4, stride=3   \\
        Maxpooling2d & stride=2                   \\
        Linear       & 144 units                  \\
        Sigmoid      & num\_classes               \\ 
        \bottomrule
        \end{tabular}
    }
\end{table}

\begin{table}[!h]
\caption{Parameter-efficient fine-tuning mechanisms for different models.}
    \centering
\begin{tabular}{cc}
    \toprule
    Model       & Fine-tuning mechanism                                                                                                                                      \\ \hline
    ResNet152   & fc + layer3.32.conv1.weight                                                                                                                                \\
    DenseNet121 & \begin{tabular}[c]{@{}c@{}}classifier  \\ + features.denseblock3.denselayer23.conv1.weight  \\ + features.denseblock3.denselayer24.conv1.weight\end{tabular} \\
    ViT         & LoRA ($\alpha=8, r=8$)                                                                                                                                                   \\ 
    \bottomrule
\end{tabular}
\label{ftmech}
\end{table}

\begin{table*}[!h]
    \caption{Choices of public dataset for private dataset. The five datasets in the first row are private datasets. The datasets listed in the first columns are choices of public datasets. 'X' means we choose the two corresponding datasets as a pair of private/public dataset.}
    \centering
    \begin{tabular}{c|ccccc}
        \toprule
                      & CIFAR-10 & SVHN & Fashion MNIST & ChestX-ray14 & HAM10000 \\ \hline
        CIFAR-10      & X        &      &               &              &          \\
        CIFAR-100     & X        & X    &               & X            & X        \\
        SVHN          & X        & X    &               &              &          \\
        MNIST-M       &          & X    &               &              &          \\
        Fashion MNIST &          &      & X             &              &          \\
        FLOWER        &          &      & X             &              &          \\
        MNIST         &          &      & X             &              &          \\
        ChestX-ray14  &          &      &               & X            &          \\
        KagChest      &          &      &               & X            & X        \\
        HAM10000      &          &      &               &              & X        \\
        KagSkin       &          &      &               &              & X        \\
        SprXRay       &          &      &               & X            &          \\
        CheXpert       &          &      &               & X            &          \\
        Kneeos        &          &      &               & X            &          \\ 
        \bottomrule
        \end{tabular}
\label{datasetchoice}
\end{table*}
\paragraph{Dataset Choice} CIFAR-10, SVHN and Fashion MNIST are commonly used for evaluation purposes in Computer Vision. For all the evaluations in Table \ref{eval1}, we use the first 2000 images in the test set as public images. For medical datasets, ChestX-ray14 consists of frontal view X-ray images with 14 different classes of lung disease. In our evaluation, there are 78,466 training examples and 20433 testing examples in ChestX-ray14. HAM10000 is composed of 10,000 dermatoscopic images of pigmented lesions. Our choices of public datasets for the four private datasets are described in Table \ref{datasetchoice}. Among them, MNIST-M \cite{mnistm} consists of MNIST digits placed on randomly selected backgrounds taken from color photos in the BSDS500 dataset. FLOWER \cite{flower} consists of 102 flower categories. KagChest \cite{kagchest} imagees were selected from retrospective cohorts of pediatric patients of one to five years old from Guangzhou Women and Children’s Medical Center, Guangzhou. It is easy to be obtained from Kaggle so we name it KagChest. SprXRay is another public chest x-ray dataset designed to help predict the age and gender of the patient based on the X-Ray image \cite{sprxray}. Kneeos is a dataset \cite{kneeos} that includes X-ray images of knees, which can be used for detecting knee joint issues and grading the severity of knee osteoarthritis using the Kellgren-Lawrence (KL) scale. CheXpert \cite{chexpert} is another chest x-ray dataset by Stanford. Similarly for the KagSkin \cite{kagskin}, which images of benign skin moles and malignant skin moles. We split the first 300 (for ChestX-ray14, this number is 2000) images in the testset and take them as public. The next 300 (again, for ChestX-ray14, this number is 2000) images are the chose private examples.

\paragraph{Hyperparameter Setting} We use $\epsilon=2$ and $\delta=1e-5$ for all the evaluations. For distance computation, we choose $k=16$. In Appendix~\ref{app:kablation}, we discuss insights into optimal choices of $k$ and present an ablation study to see how varying $k$ affects GSD. We follow the hyperparameter setting in the GEP paper for evaluation. In the GEP paper, they didn't evaluate GEP on the ChestX-ray14 dataset. In our evaluation, we choose $k=100$ and clip norms are 3 and 1 for original and residual gradients, respectively. The learning rate for the SGD optimizer is set to 0.05. All other hyperparameters are set as default. For second-phase pre-training, we use those public examples to perform supervised learning on trainable parameters. We use Adam optimizer and set $\eta=3e-3$ for Transformer-based models and $\eta=5e-5$ for CNN-based models. For private fine-tuning, we use SGD optimizer and set $\eta=0.8$ and clip norm $=0.1$.

\paragraph{Compute} Experiments are conducted on a Linux cloud server with one NVIDIA A100 GPU.

\subsection{Results for Pre-conditioning} \label{sec:preconditioning}
We compute GSD and evaluate using GEP for the chosen datasets. The evaluation results are in Table \ref{eval1}. We find that, across a very wide range of different (and non-cherry-picked) private and public datasets, final accuracy is monotone as GSD decreases.
Unexpectedly, we find that GSD between CIFAR-10 and CIFAR-100 is less than between CIFAR-10 and CIFAR-10. 
Nonetheless, this is predictive of final performance, where we see using CIFAR-100 as a public dataset is better than CIFAR-10, despite the fact that the private dataset is also CIFAR-10.
In particular, this supports our use of a rigorous metric to predict dataset suitability rather than going off intuition (which this example shows can be incorrect).

\begin{table}[!h]
 \caption{GEP evaluation accuracy and corresponding distance in descending order. We use the \emph{{same}} model for private training and GSD computation. "-" means DP-SGD without public data. We use $\epsilon=2, \delta=1e-5$.}
 \centering
    \begin{tabular}{lcll}
        \toprule
        Accuracy & Private Dataset           & Public Dataset & Distance \\ \hline
        \textbf{58.63\%}  & \multirow{4}{*}{CIFAR-10} & CIFAR-100      & \textbf{0.20}     \\ \cline{1-1} \cline{3-4} 
        57.64\%  &                           & CIFAR-10       & 0.24     \\ \cline{1-1} \cline{3-4} 
        56.75\%  &                           & SVHN           & 0.28     \\ \cline{1-1} \cline{3-4} 
        52.16\%  &                           & -              & -        \\ \hline
        \textbf{91.32\%}  & \multirow{4}{*}{SVHN}     & SVHN           & \textbf{0.25}     \\ \cline{1-1} \cline{3-4} 
        89.29\%  &                           & CIFAR-100      & 0.31     \\ \cline{1-1} \cline{3-4} 
        89.08\%  &                           & MNIST-M        & 0.39     \\ \cline{1-1} \cline{3-4} 
        83.21\%  &                           & -              & -        \\ \hline
        \textbf{85.25\%}  & \multirow{4}{*}{FMNIST}   & FMNIST         & \textbf{0.34}     \\ \cline{1-1} \cline{3-4} 
        84.54\%  &                           & FLOWER         & 0.43     \\ \cline{1-1} \cline{3-4} 
        83.91\%  &                           & MNIST          & 0.50     \\ \cline{1-1} \cline{3-4} 
        79.77\%  &                           & -              & -        \\ 
        \bottomrule
    \end{tabular}
\label{eval1}
\end{table}

For ChestX-ray14, we use AUC instead of prediction accuracy because of high class imbalance.
The evaluation results are given in Table \ref{eval2}.
Once again, lower GSD implies higher model utility.
We see that ChestX-ray14 is the best public dataset, but the second best is another chest x-ray dataset.
Furthermore, using a significantly different dataset (CIFAR-100) as the public dataset results in worse utility than using no public dataset at all. 

Other than GEP, we also evaluate our method over DP Mirror Descent \citep{dpmirror}, another approach that leverages public data as a preconditioner. We briefly introduce their algorithm in the Related Work Section \ref{relatedwork:preconditioning}. Following the evaluation setting in GEP, we have the following experiment results on the ChestX-ray14 dataset (Table \ref{eval:dpmirror}).

\begin{table}[!h]
    \caption{DP Mirror Descent evaluation AUC and corresponding distance in descending order. We use the \emph{{same}} model setting for private training and distance computation. "-" means DP-SGD training without using any public data.}
    \centering
    \begin{tabular}{lcll}
    \toprule
    AUC              & Private Dataset               & Public Dataset & Distance      \\ \hline
    \textbf{66.83\%} & \multirow{6}{*}{ChestX-ray14} & ChestX-ray14   & \textbf{0.15} \\ \cline{1-1} \cline{3-4} 
    65.79\%          &                               & SprXRay        & 0.32          \\ \cline{1-1} \cline{3-4} 
    65.70\%          &                               & CheXpert       & 0.32          \\ \cline{1-1} \cline{3-4} 
    64.92\%          &                               & KagChest       & 0.36          \\ \cline{1-1} \cline{3-4} 
    64.76\%          &                               & -              & -             \\ \cline{1-1} \cline{3-4} 
    50.21\%          &                               & CIFAR-100      & 0.55          \\ 
    \bottomrule
    \end{tabular}
\label{eval:dpmirror}
\end{table}

The order of the public datasets is still preserved except for the fact that GEP works slightly better than DP Mirror Descent on larger datasets like ChestX-ray14.

\subsection{Results for Second-Phase Pre-training} \label{sec:sppt}
We compute the GSD and evaluate using second-phase pre-training for the chosen datasets. The evaluation results are given in Table \ref{2ndhamdensenet}.
As before, we consistently find that smaller GSD leads to larger utility.
Like ChestX-ray14, HAM10000 is highly imbalanced, so we again use AUC. However, unlike ChestX-ray14, which contains roughly 100,000 images, HAM10000 is relatively small (only 10,000 skin lesion images). We assume that we can only collect 300 images from it and treat them as public. As shown in Table \ref{2ndhamdensenet}, even this small public dataset can boost the utility through second-phase pre-training.  While even the worst public dataset does not dramatically hurt utility (in contrast to the pre-conditioning setting), GSD can still be a good indicator of the utility of public datasets. Similar results apply when we evaluate second-phase pre-training and GSD on ChestX-ray14 using ViTs, as shown in Fig~\ref{morechest}, \ref{2ndchestvit} and \ref{moreham}.

\begin{table}[!h]
\centering
\caption{Second-Phase evaluation results and corresponding distance in descending order. We use DenseNet121 and choose two convolutional layers and the last layer for second-phase pre-training and private fine-tuning. Detailed settings can be found in Section~\ref{sec:exp-setting}. We use the \emph{{same}} model setting for private training and distance computation. "-" means DP-SGD training. We use $\epsilon=2, \delta=1e-5$.}
\begin{tabular}{cccc}
\toprule
AUC              & Private Dataset           & Public Dataset & Distance      \\ \hline
\textbf{87.06\%} & \multirow{4}{*}{HAM10000} & HAM10000   & \textbf{0.50} \\ \cline{1-1} \cline{3-4} 
85.53\%          &                           & KagSkin        & 0.68          \\ \cline{1-1} \cline{3-4} 
85.40\%          &                           & -              & -             \\ \cline{1-1} \cline{3-4} 
84.92\%          &                           & CIFAR-100      & 0.73          \\ \cline{1-1} \cline{3-4} 
84.88\%          &                           & KagChest       & 0.73          \\ 
\bottomrule
\end{tabular}
\label{2ndhamdensenet}
\end{table}

\begin{table}[!h]
    \caption{Second-Phase evaluation results and corresponding distance on ChestX-ray14 in descending order. Detailed settings can be found in Section~\ref{sec:exp-setting}. We use the \textit{\textbf{same}} model setting for private training and distance computation. "-" means vanilla DP-SGD training.}
    \subfloat[ResNet152]{
        \begin{adjustbox}{width=0.5\textwidth}
        \begin{tabular}{cccc}
        \hline
        AUC              & Private Dataset               & Public Dataset & Distance      \\ \hline
        \textbf{67.48\%} & \multirow{4}{*}{ChestX-ray14} & ChestX-ray14   & \textbf{0.31} \\ \cline{1-1} \cline{3-4} 
        67.27\%          &                               & KagChest       & 0.34          \\ \cline{1-1} \cline{3-4} 
        66.82\%          &                               & -              & -             \\ \cline{1-1} \cline{3-4} 
        66.57\%          &                               & CIFAR-100      & 0.39          \\ \hline
        \end{tabular}
        \end{adjustbox}
    }
    \hfill
    \subfloat[DenseNet121]{
        \begin{adjustbox}{width=0.5\textwidth}
        \begin{tabular}{cccc}
        \hline
        AUC              & Private Dataset               & Public Dataset & Distance      \\ \hline
        \textbf{67.53\%} & \multirow{4}{*}{ChestX-ray14} & ChestX-ray14   & \textbf{0.33} \\ \cline{1-1} \cline{3-4} 
        67.47\%          &                               & -              & -             \\ \cline{1-1} \cline{3-4} 
        67.40\%          &                               & KagChest       & 0.37          \\ \cline{1-1} \cline{3-4} 
        67.28\%          &                               & CIFAR-100      & 0.40          \\ \hline
        \end{tabular}
        \end{adjustbox}
    }
    
    \label{morechest}
\end{table}

\begin{table}[!h]
    \centering
    \caption{Second-Phase evaluation results and corresponding distance in descending order. We use ViT and LoRA fine-tuning. We use the \emph{{same}} model setting for private training and distance computation. Detailed settings can be found in Section~\ref{sec:exp-setting}. "-" means DP-SGD training.}.
    \begin{tabular}{cccc}
    \hline
    AUC              & Private Dataset               & Public Dataset & Distance      \\ \hline
    \textbf{72.99\%} & \multirow{4}{*}{ChestX-ray14} & ChestX-ray14   & \textbf{0.44} \\ \cline{1-1} \cline{3-4} 
    71.86\%          &                               & KagChest       & 0.59          \\ \cline{1-1} \cline{3-4} 
    70.93\%          &                               & -              & -             \\ \cline{1-1} \cline{3-4} 
    70.84\%          &                               & CIFAR-100      & 0.98          \\ \hline
    \end{tabular}
    \label{2ndchestvit}
\end{table}

\begin{table}[!h]
    \caption{Second-Phase evaluation results and corresponding distance on HAM10000 in descending order. Detailed settings can be found in Section~\ref{sec:exp-setting}. We use the \textit{\textbf{same}} model setting for private training and distance computation. "-" means vanilla DP-SGD training.}

    \subfloat[ResNet152]{
        \begin{adjustbox}{width=0.5\textwidth}
        \begin{tabular}{cccc}
        \hline
        AUC              & Private Dataset           & Public Dataset & Distance      \\ \hline
        \textbf{86.83\%} & \multirow{4}{*}{HAM10000} & HAM10000       & \textbf{0.48} \\ \cline{1-1} \cline{3-4} 
        85.95\%          &                           & KagSkin        & 0.65          \\ \cline{1-1} \cline{3-4} 
        85.55\%          &                           & -              & -             \\ \cline{1-1} \cline{3-4} 
        85.49\%          &                           & CIFAR-100      & 0.70          \\ \cline{1-1} \cline{3-4} 
        85.41\%          &                           & KagChest       & 0.70          \\ \hline
        \end{tabular}
        \end{adjustbox}
    }
    \hfill
    \subfloat[ViT]{
        \begin{adjustbox}{width=0.5\textwidth}
        \begin{tabular}{cccc}
        \hline
        AUC              & Private Dataset           & Public Dataset & Distance      \\ \hline
        \textbf{84.94\%} & \multirow{5}{*}{HAM10000} & HAM10000       & \textbf{0.50} \\ \cline{1-1} \cline{3-4} 
        81.23\%          &                           & KagSkin        & 0.76          \\ \cline{1-1} \cline{3-4} 
        78.65\%          &                           & KagChest       & 0.93          \\ \cline{1-1} \cline{3-4} 
        77.07\%          &                           & CIFAR-100      & 0.97          \\ \cline{1-1} \cline{3-4} 
        73.67\%          &                           & -              & -             \\ \hline
        \end{tabular}
        \end{adjustbox}
    }
    \label{moreham}
\end{table}

\subsection{Task2Vec May Give Wrong Predictions} \label{sec:task2vec}
We evaluate the similarity between each public-private dataset pair using Task2Vec~\cite{task2vec}. Task2Vec gives similarity results of mixed quality: to highlight one notable failure case, we consider the ChestX-ray14 private dataset in Table \ref{more task2vec}. 
The closest dataset is itself. However, following this, CIFAR-100 is as close as KagChest, while it is qualitatively very different from ChestX-ray14 and provides low utility when used as the public dataset.
In contrast, GSD orders the quality of these datasets in a manner consistent with their quality.

\begin{table}[!h]
    \caption{Results for GSD vs. Task2Vec. We evaluate ChestX-ray14 using GEP and compute the distance using Task2Vec and GSD. As suggested by Task2Vec, CIFAR-100 should be as close to ChestX-ray14 as KagChest, while it actually provides low utility.}
    \centering
    \begin{tabular}{llll}
    \toprule
                 & AUC     & Task2Vec & GSD \\ \hline
    ChestX-ray14 & 69.02\% & 0.052             & 0.15         \\
    KagChest     & 66.62\% & 0.16              & 0.36         \\
    -            & 64.90\% & -                 & -            \\
    CIFAR-100    & 48.80\% & 0.16              & 0.55         \\ 
    \bottomrule
    \end{tabular}
    
    \label{more task2vec}
\end{table}

\begin{figure}[!h]
    \centering
    \subfloat[Clustermap given by Task2Vec]{\includegraphics[width=0.5\linewidth]{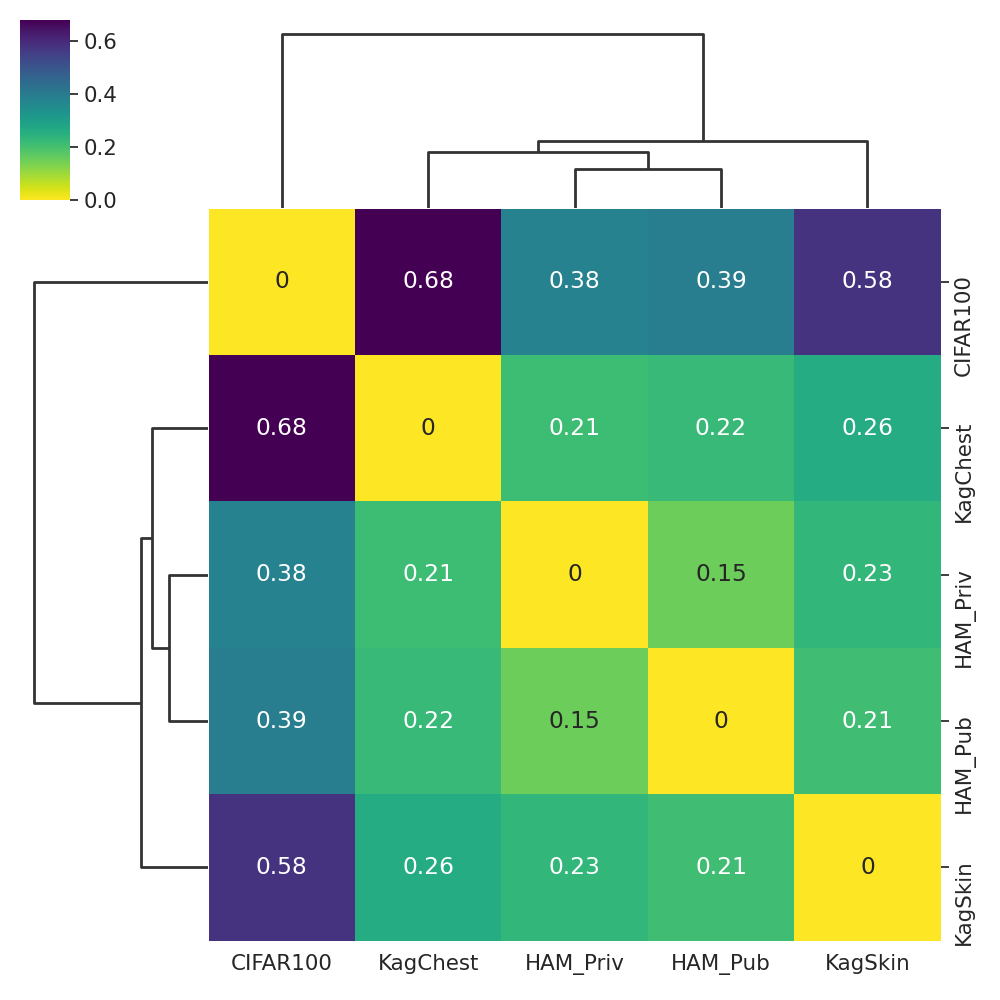}}
    \hfil
    \subfloat[Clustermap given by GSD]{\includegraphics[width=0.5\linewidth]{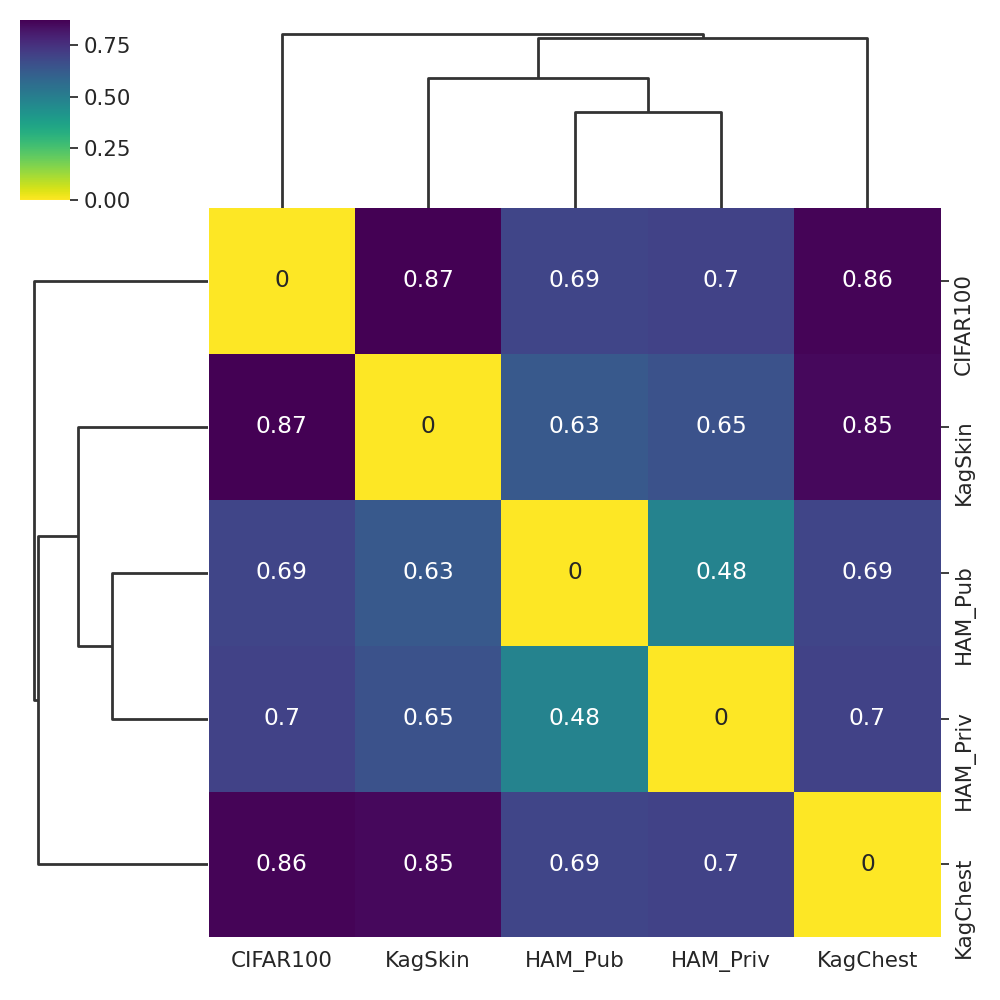}}
    \caption{Clustermaps given by Task2Vec (left) and GSD(right) for HAM10000 and corresponding public datasets. The lines on the top and the left denote the similarity of a pair of datasets. The numbers in the grid are the similarity distance for a pair of datasets. Although Task2Vec gives the correct prediction for (HAM$_{public}$, HAM$_{private}$) pair, it \textit{\textbf{incorrectly}} predicts the second-close similarity, where it believes KagChest is the second-close dataset for HAM10000, while they should be totally irrelevant.}
    \label{clustermap}
\end{figure}

The results on HAM10000 public dataset are presented here using cluster map in Figure \ref{clustermap}.
The closest datasets are itself, and the HAM10000 public dataset. 
However, following this, the closest dataset is KagChest, which is qualitatively very different from HAM10000 (chest x-rays versus skin lesions) and provides low utility when used as the public dataset (see Table~\ref{tab:transferability}).
In particular, KagSkin (another skin disease dataset) is qualitatively closer to HAM10000 and provides higher utility when used as a public dataset, yet Task2Vec assigns it a greater distance than KagChest.
In contrast, GSD orders the quality of these datasets in a manner consistent with their quality.

\subsection{OTDD: Better than Task2Vec But Hard to Make DP}  \label{app:otdd}
We compare GSD with the Optimal Transport Dataset Distance (OTDD) algorithm \citep{otdd}, which also measures dataset similarity. OTDD considers labels $(Y)$ as distributions over features $(X)$ and calculates the earth mover distance (Wasserstein distance) between these distributions for a dataset $D = (X, Y)$. Evaluation results can be found in Table \ref{otdd}. 

\begin{table}[!h]
    \caption{Results for GSD vs. OTDD. We evaluate ChestX-ray14 using GEP and compute the distance using OTDD and GSD.}
    \centering
    \begin{tabular}{llll}
    \toprule
                 & AUC     & OTDD & GSD \\ \hline
    ChestX-ray14 & 69.02\% & 64057             & 0.15         \\
    SprXRay      & 67.22\% & 168755            & 0.32         \\
    CheXpert     & 67.20\% & 135292            & 0.32         \\
    KagChest     & 66.61\% & 152930            & 0.36         \\
    Kneeos       & 65.37\% & 217431            & 0.38         \\
    -            & 64.76\% & -                 & -            \\
    CIFAR-100    & 48.60\% & 277655              & 0.55         \\ 
    \bottomrule
    \end{tabular}
    
    \label{otdd}
\end{table}

OTDD accurately orders the majority of datasets, but it struggles with "semantically close" datasets like SprXRay, CheXpert, and KagChest. This is generally not a good sign because in real-world scenarios, in-distribution public datasets are often not accessible, making it crucial to order public datasets precisely. Furthermore, while GSD is not DP itself, OTDD needs multiple direct interactions with input images, making achieving DP significantly more challenging.

\subsection{Transferability: Simple Models Remain Predictive}\label{sec:transfer}
Our empirical evaluation suggests that GSD is transferable over different architectures. In previous experiments, we used the same model architecture for both GSD and the (private) learning algorithm.
We find that the relative GSD ordering of different public datasets is robust across different architectures. For example, \textbf{\textit{GSD}}(ChestX-ray14, KagChest) is consistently smaller than \textbf{\textit{GSD}}(ChestX-ray14, CIFAR-100), no matter what model architecture or parameter-efficient fine-tuning mechanism we choose. Inspired by this finding, we measure GSD with a very simple CNN, which we call a ``probe network.'' It consists of two convolutional layers and one linear layer, with roughly 30,000 parameters. Evaluation results are given in Table~\ref{tab:transferability}.
They demonstrate that even using a simple CNN, GSD can still derive accurate distance measurement with regard to the utility of public data for private learning tasks. The similarity described by GSD is thus robust against the choice of model architecture. 

\begin{table}[!h]
\caption{Transferability evaluation results. The left-most column denotes each pair of private-public datasets, e.g. (Xray, Xray) means we take ChestX-ray14 as a private dataset, split part of its testset and take those images as public. Detailed settings can be found in Section~\ref{sec:exp-setting}. The first row denotes different model architectures. "Probe" is a simple CNN with around 30,000 parameters. ResNet152$^*$ and ResNet152$^{**}$ use different parameter-efficient fine-tuning settings. We use the same model for each Distance-Accuracy. The results show that this distance given by GSD is generally robust across different algorithms (pre-conditioning or second-phase pre-training) and different model architectures (from simple Probe to ViT). A smaller distance indicates that this public dataset is more similar to the private one, thus leveraging this public dataset for private learning will result in better accuracy.}
\centering
\begin{adjustbox}{width=1.\textwidth}
\begin{tabular}{cc|cccc}
\hline
                                   & Probe                 & ResNet152$^*$               & ResNet152$^{**}$               & DenseNet121                & ViT                     \\ \hline
\multicolumn{2}{c|}{Task}                                  & Pre-conditioning        & Second-phase            & Second-phase            & Second-phase            \\ \hline
\multicolumn{1}{l}{}               & \multicolumn{1}{l|}{} & \multicolumn{4}{c}{Distance | Accuracy}                                                               \\ \hline
\multicolumn{1}{c|}{(Xray, Xray)}  & \textbf{0.39}         & \textbf{0.15 | 69.02\%} & \textbf{0.31 | 67.48\%} & \textbf{0.33 | 67.53\%} & \textbf{0.44 | 72.99\%} \\
\multicolumn{1}{c|}{(Xray, Chest)} & 0.52                  & 0.36 | 66.62\%          & 0.34 | 67.27\%          & 0.37 | 67.40\%          & 0.59 | 71.86\%          \\
\multicolumn{1}{c|}{(Xray, CIFAR)} & 0.58                  & 0.55 | 48.80\%          & 0.39 | 66.57\%          & 0.40 | 67.28\%          & 0.98 | 70.84\%          \\ \hline
\multicolumn{1}{c|}{(HAM, HAM)}    & \textbf{0.42}         & -                       & \textbf{0.48 | 86.83\%} & \textbf{0.50 | 87.06\%} & \textbf{0.50 | 84.94\%} \\
\multicolumn{1}{c|}{(HAM, Skin)}   & 0.55                  & -                       & 0.65 | 85.95\%          & 0.68 | 85.53\%          & 0.76 | 81.23\%                 \\
\multicolumn{1}{c|}{(HAM, CIFAR)}  & 0.67                  & -                       & 0.70 | 85.49\%          & 0.73 | 84.92\%          & 0.97 | 77.07\%          \\
\multicolumn{1}{c|}{(HAM, Chest)}  & 0.76                  & -                       & 0.70 | 85.41\%          & 0.73 | 84.88\%          & 0.93 | 78.65\%                 \\ \hline
\end{tabular}
\end{adjustbox}
\label{tab:transferability}
\end{table}

\subsection{Combinations of Datasets Better Than One: Not Necessarily True}
\label{app:combinations}
While some studies point out that pre-training on multiple diverse datasets can lead to better performance on downstream NLP tasks compared to pre-training on a single dataset \citep{combinations}, our experiment results show that under small amount of accessible public data assumption, this conclusion is generally but not necessarily true, as shown in Table \ref{table:combination}.

\begin{table}[h]
    \caption{GEP evaluation AUC and corresponding distance in descending order. We use the \emph{{same}} model setting for private training and distance computation. "-" means DP-SGD training without using any public data.}
    \centering
    \begin{tabular}{llll}
        \toprule
        AUC              & Private Dataset               & Public Dataset       & Distance      \\ \hline
        \textbf{69.02\%} & \multirow{9}{*}{ChestX-ray14} & ChestX-ray14         & \textbf{0.15} \\ \cline{1-1} \cline{3-4} 
        67.83\%          &                               & SprXRay + CheXpert   & 0.30          \\ \cline{1-1} \cline{3-4} 
        67.31\%          &                               & SprXRay              & 0.32          \\ \cline{1-1} \cline{3-4} 
        67.29\%          &                               & CheXpert             & 0.32          \\ \cline{1-1} \cline{3-4} 
        67.12\%          &                               & CheXpert + CIFAR-100 & 0.33          \\ \cline{1-1} \cline{3-4} 
        66.62\%          &                               & KagChest             & 0.36          \\ \cline{1-1} \cline{3-4} 
        65.26\%          &                               & Kneeos               & 0.38          \\ \cline{1-1} \cline{3-4} 
        64.90\%          &                               & -                    & -             \\ \cline{1-1} \cline{3-4} 
        48.80\%          &                               & CIFAR-100            & 0.55          \\ \bottomrule
        \end{tabular}
\label{table:combination}
\end{table}

Using SprXRay or CheXpert as public datasets for private learning can result in higher accuracy compared to the DP-SGD baseline. Combining these datasets can further improve accuracy. However, if a "bad" public dataset like CIFAR-100 is combined with ChestX-ray14, it can negatively impact the overall performance.

\subsection{MIA: No Better Than Random Guessing} \label{app:mia}
Like other hyperparameter searches, GSD should be used locally and only the best public dataset will be reported (e.g. CIFAR-100 for private CIFAR-10). To claim that this won't reveal sensitive information too much when we use a batch of private data for computing GSD, we evaluate it under membership inference attacks. We choose a well-studied membership inference attack introduced in \cite{mia} under white box setting and use Adversarial Robustness Toolbox (ART) \citep{mia_art}, a Python library for ML under black box setting. 

Based on the recommendations of \cite{mia}, we have the following assumption one the adversary: White-box \& Black-box, Stand-alone, Passive and Supervised:
\begin{itemize}
    \item Black-box refers to the situation where the attacker can only obtain the output. White-box means the attacker can access the full model: intermediate computations, gradients, etc.
    \item Stand-alone means that GSD is computed in a centralized manner.
    \item Passive means that the attacker can observe the computation. The attacker cannot be part of the computation in the federated learning setting.
    \item Supervised: we assume that the attacker knows some of the examples in the private dataset. Note that this is a very strong (even unrealistic) assumption for the attacker. We did this evaluation to illustrate that even though the attacker have such knowledge, the attacker still cannot infer if a new example is in the private data selected for GSD or not.
\end{itemize}

The white-box membership inference attack, introduced by \cite{mia}, makes use of the information from loss values, gradients, hidden layer activations, etc. They combine these information and supervisely train a binary classifier to tell if a particular image $(x_i, y_i)$ is in $x_{priv}$ or not. Without having access to the model, the black-box attack of \cite{mia_art} trains a binary classifier on the images in a supervised manner.

\paragraph{Experiment Setting.} Let's say based on the result by GSD: "CIFAR-100 is the best dataset for CIFAR-10" (see Table \ref{eval1}), the adversary wants to figure out if a particular image $(x_i, y_i)$ is used in GSD computation or not (i.e. infer whether $(x_i, y_i)$ in $x_{priv}$. Under white-box setting, we assume that the adversary knows the model we use, i.e. Resnet20. More importantly, we assume that the adversary has already known that $n$ numbers of images we used in $x_{priv}$. Since the adversary knows that CIFAR-10 is the private dataset, we also assume that the adversary has access to all the CIFAR-10 datasets. (still, this is a strong and even unrealistic assumption for the adversary in practice) That is, the adversary has access to all the images in CIFAR-10 and knows $n$ of them are the images we used in $x_{priv}$. The adversary wants to know which of the rest of them are in $x_{priv}$. The test set is composed of the rest of the images in $x_{priv}$ and a same amount of images that are not in $x_{priv}$.

\begin{table}[!h]
\caption{Attack success rate of white-box membership inference attack. Note that $|x_{priv}|=2000$.}
\centering
\begin{tabular}{lll}
\toprule
                             & n=1000  & n=1800  \\ \hline
\textbf{w/o GSD values}       &         &         \\ \hline
Use gradients                & 50\%    & 50\%    \\ \hline
Use hidden layer activations & 50\%    & 50\%    \\ \hline
\textbf{w/ GSD values}      &         &         \\ \hline
Use gradients                & 50.25\% & 50.31\% \\ \hline
Use hidden layer activations & 50.05\% & 50.06\% \\
\bottomrule
\end{tabular}
\label{eval:mic_white}
\end{table}

Evaluation results on white-box membership inference attack are given in Table \ref{eval:mic_white}. Since GSD computation doesn't require training the model, the model doesn't contain any information about the dataset, i.e., CIFAR-10, even when the adversary knows 1800 images in $x_{priv}$, still cannot infer whether a new image $(x_i, y_i)$ is in $x_{priv}$ or not. Observing these results, we give an even stronger assumption on the adversary: the adversary also knows the public datasets we use, i.e. CIFAR-100 and SVHN (see Table \ref{eval1}), and knows \emph{exactly} which batch we are using. Based on the suggestions of \cite{mia}, we add extra information as follows: the adversary uses those datasets to compute GSD, observes it T times, and adds these GSD values to the binary classifier. But still, the inference is as good as a random guess.

We also evaluate GSD under a black-box membership inference attack, provided by \cite{mia_art}. They train a shallow network as a binary classifier in a supervised manner. Results are given in Table \ref{eval:mic_black}. Starting the attack with pure images instead of extracting gradients, hidden layer activations seem to have a better attack success rate. But still, this poor attack success rate indicates that GSD can hardly cause privacy leakage.

\begin{table}[!h]
\caption{Attack success rate of black-box membership inference attack. Note that $|x_{priv}|=2000$.}
\centering
\begin{tabular}{lll}
\toprule
                    & n=1000  & n=1800  \\ \hline
Attack Success Rate & 51.65\% & 52.19\% \\ 
\bottomrule
\end{tabular}
\label{eval:mic_black}
\end{table}

\section{Limitations and Discussions}
We note that, as stated, GSD is not differentially private, as it interacts with the unprotected gradients of the private data. 
For our scientific question, exploring which public datasets perform well for a particular private task, this is inconsequential.
For the algorithmic problem of actually selecting a public dataset for use, one may fear that choosing a public dataset based on GSD may leak sensitive information about the private dataset. 
We acknowledge that best practices, especially for the sake of privacy, nonetheless mandate more careful accounting.
However, we remind that 1) the selection of a public dataset is a hyperparameter: in essentially all work on private ML, hyperparameter tuning is performed non-privately, since the privacy impact is considered to be minimal~\citep{PapernotS22,MohapatraSHKT22}, 2) GSD is a one-time computation: only one pass over one minibatch of private samples.
Thus, we do not take this into account in this paper for a level comparison with the majority of the private ML literature, and the fact that certain architectural hyperparameters have been chosen within the non-private literature by repeated inspection of the datasets we consider private, so a full and honest accounting seems near impossible. We emphasize that these critiques apply across the board to essentially all work on DP ML, and not just our paper.
However, we nonetheless discuss differentially private methods for GSD computation in Appendix~\ref{app:privgsd}.
Furthermore, in Appendix ~\ref{app:mia}, we empirically evaluate the privacy leakage of GSD under a number of membership inference attacks, and even with very strong assumptions on the attacker, they are unable to mount effective attacks. We leave the development of a practical method with rigorous DP guarantees for future work.

\section{Conclusion}
A recent line of work explores the power of public data in private machine learning. 
However, we do not yet have a good understanding of which public datasets are more or less effective for a particular private task, and thus how to select them.
We propose a new distance GSD, and empirically demonstrate that lower GSD of a public dataset is strongly predictive of higher downstream utility. 
Our algorithms require minimal data and are computationally efficient.
Additionally, transferability of GSD demonstrates that it is generally model agnostic, allowing one to decouple the public dataset selection and private learning.
We further demonstrate that GSD is effective for predicting utility in settings involving both pre-conditioning and second-phase pre-training, and that GSD compares favorably to other measures of dataset distance. 

\section*{Acknowledgement}
 ZSW was in part supported by an NSF CAREER Award \#2339775 and NSF Award \#2232693.
 GK is supported by an NSERC Discovery Grant, an unrestricted gift from Google, and a University of Waterloo startup grant.

\bibliography{bibtex}

\newcommand{\etalchar}[1]{$^{#1}$}
\begin{thebibliography}{HLvdMW17}

\bibitem[ACG{\etalchar{+}}16]{DP-SGD}
Martin Abadi, Andy Chu, Ian Goodfellow, H.~Brendan McMahan, Ilya Mironov, Kunal Talwar, and Li~Zhang.
\newblock Deep learning with differential privacy.
\newblock In {\em Proceedings of the 2016 ACM SIGSAC Conference on Computer and Communications Security}, CCS '16, page 308–318, New York, NY, USA, 2016. Association for Computing Machinery.

\bibitem[AF20]{otdd}
David Alvarez{-}Melis and Nicol{\'{o}} Fusi.
\newblock Geometric dataset distances via optimal transport.
\newblock {\em CoRR}, abs/2002.02923, 2020.

\bibitem[AGM{\etalchar{+}}22]{dpmirror}
Ehsan Amid, Arun Ganesh, Rajiv Mathews, Swaroop Ramaswamy, Shuang Song, Thomas Steinke, Thomas Steinke, Vinith~M Suriyakumar, Om~Thakkar, and Abhradeep Thakurta.
\newblock Public data-assisted mirror descent for private model training.
\newblock In Kamalika Chaudhuri, Stefanie Jegelka, Le~Song, Csaba Szepesvari, Gang Niu, and Sivan Sabato, editors, {\em Proceedings of the 39th International Conference on Machine Learning}, volume 162 of {\em Proceedings of Machine Learning Research}, pages 517--535. PMLR, 17--23 Jul 2022.

\bibitem[ALT{\etalchar{+}}19]{task2vec}
Alessandro Achille, Michael Lam, Rahul Tewari, Avinash Ravichandran, Subhransu Maji, Charless~C. Fowlkes, Stefano Soatto, and Pietro Perona.
\newblock Task2vec: Task embedding for meta-learning.
\newblock In {\em 2019 {IEEE/CVF} International Conference on Computer Vision, {ICCV} 2019, Seoul, Korea (South), October 27 - November 2, 2019}, pages 6429--6438. {IEEE}, 2019.

\bibitem[AM22]{kneeos}
Sozan~Mohammed Ahmed and Ramadhan~J. Mstafa.
\newblock Identifying severity grading of knee osteoarthritis from x-ray images using an efficient mixture of deep learning and machine learning models.
\newblock {\em Diagnostics}, 12(12), 2022.

\bibitem[BBC{\etalchar{+}}10]{Ben-DavidBCKPV10}
Shai Ben{-}David, John Blitzer, Koby Crammer, Alex Kulesza, Fernando Pereira, and Jennifer~Wortman Vaughan.
\newblock A theory of learning from different domains.
\newblock {\em Mach. Learn.}, 79(1-2):151--175, 2010.

\bibitem[BBCP06]{Ben-DavidBCP06}
Shai Ben{-}David, John Blitzer, Koby Crammer, and Fernando Pereira.
\newblock Analysis of representations for domain adaptation.
\newblock In Bernhard Sch{\"{o}}lkopf, John~C. Platt, and Thomas Hofmann, editors, {\em Advances in Neural Information Processing Systems 19, Proceedings of the Twentieth Annual Conference on Neural Information Processing Systems, Vancouver, British Columbia, Canada, December 4-7, 2006}, pages 137--144. {MIT} Press, 2006.

\bibitem[BMR{\etalchar{+}}20]{combinations}
Tom Brown, Benjamin Mann, Nick Ryder, Melanie Subbiah, Jared~D Kaplan, Prafulla Dhariwal, Arvind Neelakantan, Pranav Shyam, Girish Sastry, Amanda Askell, Sandhini Agarwal, Ariel Herbert-Voss, Gretchen Krueger, Tom Henighan, Rewon Child, Aditya Ramesh, Daniel Ziegler, Jeffrey Wu, Clemens Winter, Chris Hesse, Mark Chen, Eric Sigler, Mateusz Litwin, Scott Gray, Benjamin Chess, Jack Clark, Christopher Berner, Sam McCandlish, Alec Radford, Ilya Sutskever, and Dario Amodei.
\newblock Language models are few-shot learners.
\newblock In H.~Larochelle, M.~Ranzato, R.~Hadsell, M.F. Balcan, and H.~Lin, editors, {\em Advances in Neural Information Processing Systems}, volume~33, pages 1877--1901. Curran Associates, Inc., 2020.

\bibitem[BST14]{dpsgd3}
Raef Bassily, Adam~D. Smith, and Abhradeep Thakurta.
\newblock Private empirical risk minimization: Efficient algorithms and tight error bounds.
\newblock In {\em 55th {IEEE} Annual Symposium on Foundations of Computer Science, {FOCS} 2014, Philadelphia, PA, USA, October 18-21, 2014}, pages 464--473. {IEEE} Computer Society, 2014.

\bibitem[BWZK22]{bias-term}
Zhiqi Bu, Yu-Xiang Wang, Sheng Zha, and George Karypis.
\newblock Differentially private bias-term only fine-tuning of foundation models, 2022.

\bibitem[CCCPT22]{firstlast}
Yannis Cattan, Christopher~A. Choquette-Choo, Nicolas Papernot, and Abhradeep Thakurta.
\newblock Fine-tuning with differential privacy necessitates an additional hyperparameter search, 2022.

\bibitem[Chi03]{matrixbingham}
Yasuko Chikuse.
\newblock {\em Statistics on special manifolds}, volume 174.
\newblock Springer Science \& Business Media, 2003.

\bibitem[CHS02]{projmetric3}
J.~H. Conway, R.~H. Hardin, and N.~J.~A. Sloane.
\newblock Packing lines, planes, etc.: Packings in grassmannian space.
\newblock 2002.

\bibitem[CSS13]{dppca1}
Kamalika Chaudhuri, Anand~D. Sarwate, and Kaushik Sinha.
\newblock A near-optimal algorithm for differentially-private principal components.
\newblock {\em J. Mach. Learn. Res.}, 14(1):2905--2943, 2013.

\bibitem[CTW{\etalchar{+}}21]{extract-gpt2}
Nicholas Carlini, Florian Tram{\`e}r, Eric Wallace, Matthew Jagielski, Ariel Herbert-Voss, Katherine Lee, Adam Roberts, Tom Brown, Dawn Song, {\'U}lfar Erlingsson, Alina Oprea, and Colin Raffel.
\newblock Extracting training data from large language models.
\newblock In {\em 30th USENIX Security Symposium (USENIX Security 21)}, pages 2633--2650. USENIX Association, August 2021.

\bibitem[dAKFA{\etalchar{+}}23]{sprxray}
Paulo~Eduardo de~Aguiar~Kuriki, Eduardo Farina, Nitamar Abdala, Bruno Aragão, Marcelo Coelho, Marcelo~Takahashi Straus, Gabriel Bianco, and Felipe~Campos Kitamura, 2023.

\bibitem[DBH{\etalchar{+}}22]{DeBHSB22}
Soham De, Leonard Berrada, Jamie Hayes, Samuel~L Smith, and Borja Balle.
\newblock Unlocking high-accuracy differentially private image classification through scale.
\newblock {\em arXiv preprint arXiv:2204.13650}, 2022.

\bibitem[DBK{\etalchar{+}}20]{vit}
Alexey Dosovitskiy, Lucas Beyer, Alexander Kolesnikov, Dirk Weissenborn, Xiaohua Zhai, Thomas Unterthiner, Mostafa Dehghani, Matthias Minderer, Georg Heigold, Sylvain Gelly, Jakob Uszkoreit, and Neil Houlsby.
\newblock An image is worth 16x16 words: Transformers for image recognition at scale.
\newblock {\em CoRR}, abs/2010.11929, 2020.

\bibitem[DDS{\etalchar{+}}09]{imagenet}
J.~Deng, W.~Dong, R.~Socher, L.-J. Li, K.~Li, and L.~Fei-Fei.
\newblock {ImageNet: A Large-Scale Hierarchical Image Database}.
\newblock In {\em CVPR09}, 2009.

\bibitem[DK70]{daviskahan}
Chandler Davis and W.~M. Kahan.
\newblock The rotation of eigenvectors by a perturbation. iii.
\newblock {\em SIAM Journal on Numerical Analysis}, 7(1):1--46, 1970.

\bibitem[DMNS06]{dwork-dp}
Cynthia Dwork, Frank McSherry, Kobbi Nissim, and Adam Smith.
\newblock Calibrating noise to sensitivity in private data analysis.
\newblock In Shai Halevi and Tal Rabin, editors, {\em Theory of Cryptography}, pages 265--284, Berlin, Heidelberg, 2006. Springer Berlin Heidelberg.

\bibitem[DTTZ14]{analysegauss}
Cynthia Dwork, Kunal Talwar, Abhradeep Thakurta, and Li~Zhang.
\newblock Analyze gauss: optimal bounds for privacy-preserving principal component analysis.
\newblock In David~B. Shmoys, editor, {\em Symposium on Theory of Computing, {STOC} 2014, New York, NY, USA, May 31 - June 03, 2014}, pages 11--20. {ACM}, 2014.

\bibitem[EAS98]{projectionmetric2}
Alan Edelman, Tom{\'{a}}s~A. Arias, and Steven~Thomas Smith.
\newblock The geometry of algorithms with orthogonality constraints.
\newblock {\em {SIAM} J. Matrix Anal. Appl.}, 20(2):303--353, 1998.

\bibitem[EY36]{young}
Carl Eckart and Gale Young.
\newblock The approximation of one matrix by another of lower rank.
\newblock {\em Psychometrika}, 1(3):211--218, 1936.

\bibitem[Fan19]{kagskin}
Claudio Fanconi.
\newblock Skin cancer: Malignant vs. benign, June 2019.

\bibitem[FJR15]{model-memorize-data}
Matt Fredrikson, Somesh Jha, and Thomas Ristenpart.
\newblock Model inversion attacks that exploit confidence information and basic countermeasures.
\newblock In {\em Proceedings of the 22nd ACM SIGSAC Conference on Computer and Communications Security}, CCS '15, page 1322–1333, New York, NY, USA, 2015. Association for Computing Machinery.

\bibitem[GAW{\etalchar{+}}22]{aws}
Aditya Golatkar, Alessandro Achille, Yu{-}Xiang Wang, Aaron Roth, Michael Kearns, and Stefano Soatto.
\newblock Mixed differential privacy in computer vision.
\newblock {\em CoRR}, abs/2203.11481, 2022.

\bibitem[GHN{\etalchar{+}}23]{pmlr-v202-ganesh23a}
Arun Ganesh, Mahdi Haghifam, Milad Nasr, Sewoong Oh, Thomas Steinke, Om~Thakkar, Abhradeep Guha~Thakurta, and Lun Wang.
\newblock Why is public pretraining necessary for private model training?
\newblock In Andreas Krause, Emma Brunskill, Kyunghyun Cho, Barbara Engelhardt, Sivan Sabato, and Jonathan Scarlett, editors, {\em Proceedings of the 40th International Conference on Machine Learning}, volume 202 of {\em Proceedings of Machine Learning Research}, pages 10611--10627. PMLR, 23--29 Jul 2023.

\bibitem[GL96]{principleangles}
Gene~H. Golub and Charles F.~Van Loan.
\newblock {\em Matrix Computations, Third Edition}.
\newblock Johns Hopkins University Press, 1996.

\bibitem[GMS{\etalchar{+}}20]{dontstop}
Suchin Gururangan, Ana Marasovic, Swabha Swayamdipta, Kyle Lo, Iz~Beltagy, Doug Downey, and Noah~A. Smith.
\newblock Don't stop pretraining: Adapt language models to domains and tasks.
\newblock In Dan Jurafsky, Joyce Chai, Natalie Schluter, and Joel~R. Tetreault, editors, {\em Proceedings of the 58th Annual Meeting of the Association for Computational Linguistics, {ACL} 2020, Online, July 5-10, 2020}, pages 8342--8360. Association for Computational Linguistics, 2020.

\bibitem[GMT14]{distshiftsurvey}
Virginie Gabrel, C{\'e}cile Murat, and Aur{\'e}lie Thiele.
\newblock Recent advances in robust optimization: An overview.
\newblock {\em European journal of operational research}, 235(3):471--483, 2014.

\bibitem[GRD18]{subspace1}
Guy Gur{-}Ari, Daniel~A. Roberts, and Ethan Dyer.
\newblock Gradient descent happens in a tiny subspace.
\newblock {\em CoRR}, abs/1812.04754, 2018.

\bibitem[GUA{\etalchar{+}}16]{mnistm}
Yaroslav Ganin, Evgeniya Ustinova, Hana Ajakan, Pascal Germain, Hugo Larochelle, Fran{\c{c}}ois Laviolette, Mario Marchand, and Victor~S. Lempitsky.
\newblock Domain-adversarial training of neural networks.
\newblock {\em J. Mach. Learn. Res.}, 17:59:1--59:35, 2016.

\bibitem[HAS{\etalchar{+}}18]{inatural}
Grant~Van Horn, Oisin~Mac Aodha, Yang Song, Yin Cui, Chen Sun, Alexander Shepard, Hartwig Adam, Pietro Perona, and Serge~J. Belongie.
\newblock The inaturalist species classification and detection dataset.
\newblock In {\em 2018 {IEEE} Conference on Computer Vision and Pattern Recognition, {CVPR} 2018, Salt Lake City, UT, USA, June 18-22, 2018}, pages 8769--8778. Computer Vision Foundation / {IEEE} Computer Society, 2018.

\bibitem[HGJ{\etalchar{+}}19]{adapter}
Neil Houlsby, Andrei Giurgiu, Stanislaw Jastrzebski, Bruna Morrone, Quentin De~Laroussilhe, Andrea Gesmundo, Mona Attariyan, and Sylvain Gelly.
\newblock Parameter-efficient transfer learning for {NLP}.
\newblock In Kamalika Chaudhuri and Ruslan Salakhutdinov, editors, {\em Proceedings of the 36th International Conference on Machine Learning}, volume~97 of {\em Proceedings of Machine Learning Research}, pages 2790--2799. PMLR, 09--15 Jun 2019.

\bibitem[HL08]{projectionmetric}
Jihun Ham and Daniel~D. Lee.
\newblock Grassmann discriminant analysis: a unifying view on subspace-based learning.
\newblock In William~W. Cohen, Andrew McCallum, and Sam~T. Roweis, editors, {\em Machine Learning, Proceedings of the Twenty-Fifth International Conference {(ICML} 2008), Helsinki, Finland, June 5-9, 2008}, volume 307 of {\em {ACM} International Conference Proceeding Series}, pages 376--383. {ACM}, 2008.

\bibitem[HLvdMW17]{densenet}
Gao Huang, Zhuang Liu, Laurens van~der Maaten, and Kilian~Q. Weinberger.
\newblock Densely connected convolutional networks.
\newblock In {\em 2017 {IEEE} Conference on Computer Vision and Pattern Recognition, {CVPR} 2017, Honolulu, HI, USA, July 21-26, 2017}, pages 2261--2269. {IEEE} Computer Society, 2017.

\bibitem[HysW{\etalchar{+}}22]{lora}
Edward~J Hu, yelong shen, Phillip Wallis, Zeyuan Allen-Zhu, Yuanzhi Li, Shean Wang, Lu~Wang, and Weizhu Chen.
\newblock Lo{RA}: Low-rank adaptation of large language models.
\newblock In {\em International Conference on Learning Representations}, 2022.

\bibitem[HZRS16]{resnet}
Kaiming He, Xiangyu Zhang, Shaoqing Ren, and Jian Sun.
\newblock Deep residual learning for image recognition.
\newblock In {\em 2016 {IEEE} Conference on Computer Vision and Pattern Recognition, {CVPR} 2016, Las Vegas, NV, USA, June 27-30, 2016}, pages 770--778. {IEEE} Computer Society, 2016.

\bibitem[IRK{\etalchar{+}}19]{chexpert}
Jeremy Irvin, Pranav Rajpurkar, Michael Ko, Yifan Yu, Silviana Ciurea-Ilcus, Chris Chute, Henrik Marklund, Behzad Haghgoo, Robyn Ball, Katie Shpanskaya, Jayne Seekins, David~A. Mong, Safwan~S. Halabi, Jesse~K. Sandberg, Ricky Jones, David~B. Larson, Curtis~P. Langlotz, Bhavik~N. Patel, Matthew~P. Lungren, and Andrew~Y. Ng.
\newblock Chexpert: A large chest radiograph dataset with uncertainty labels and expert comparison, 2019.

\bibitem[KGC{\etalchar{+}}18]{kagchest}
Daniel~S. Kermany, Michael Goldbaum, Wenjia Cai, Carolina~C.S. Valentim, Huiying Liang, Sally~L. Baxter, Alex McKeown, Ge~Yang, Xiaokang Wu, Fangbing Yan, Justin Dong, Made~K. Prasadha, Jacqueline Pei, Magdalene~Y.L. Ting, Jie Zhu, Christina Li, Sierra Hewett, Jason Dong, Ian Ziyar, Alexander Shi, Runze Zhang, Lianghong Zheng, Rui Hou, William Shi, Xin Fu, Yaou Duan, Viet~A.N. Huu, Cindy Wen, Edward~D. Zhang, Charlotte~L. Zhang, Oulan Li, Xiaobo Wang, Michael~A. Singer, Xiaodong Sun, Jie Xu, Ali Tafreshi, M.~Anthony Lewis, Huimin Xia, and Kang Zhang.
\newblock Identifying medical diagnoses and treatable diseases by image-based deep learning.
\newblock {\em Cell}, 172(5):1122--1131.e9, 2018.

\bibitem[KH{\etalchar{+}}09]{cifar10}
Alex Krizhevsky, Geoffrey Hinton, et~al.
\newblock Learning multiple layers of features from tiny images.
\newblock 2009.

\bibitem[KRRT20]{precondtheo}
Peter Kairouz, M{\'{o}}nica Ribero, Keith Rush, and Abhradeep Thakurta.
\newblock Dimension independence in unconstrained private {ERM} via adaptive preconditioning.
\newblock {\em CoRR}, abs/2008.06570, 2020.

\bibitem[KRRT21]{KairouzRRT21}
Peter Kairouz, M{\'o}nica Ribero, Keith Rush, and Abhradeep Thakurta.
\newblock (nearly) dimension independent private {ERM} with adagrad rates via publicly estimated subspaces.
\newblock In {\em Proceedings of the 34th Annual Conference on Learning Theory}, COLT '21, pages 2717--2746, 2021.

\bibitem[LGZ{\etalchar{+}}20]{subspace2}
Xinyan Li, Qilong Gu, Yingxue Zhou, Tiancong Chen, and Arindam Banerjee.
\newblock Hessian based analysis of {SGD} for deep nets: Dynamics and generalization.
\newblock In Carlotta Demeniconi and Nitesh~V. Chawla, editors, {\em Proceedings of the 2020 {SIAM} International Conference on Data Mining, {SDM} 2020, Cincinnati, Ohio, USA, May 7-9, 2020}, pages 190--198. {SIAM}, 2020.

\bibitem[LK18]{dan1}
Jaewoo Lee and Daniel Kifer.
\newblock Concentrated differentially private gradient descent with adaptive per-iteration privacy budget.
\newblock In Yike Guo and Faisal Farooq, editors, {\em Proceedings of the 24th {ACM} {SIGKDD} International Conference on Knowledge Discovery {\&} Data Mining, {KDD} 2018, London, UK, August 19-23, 2018}, pages 1656--1665. {ACM}, 2018.

\bibitem[LKJO22]{liu2022dppca}
Xiyang Liu, Weihao Kong, Prateek Jain, and Sewoong Oh.
\newblock {DP}-{PCA}: Statistically optimal and differentially private {PCA}.
\newblock In Alice~H. Oh, Alekh Agarwal, Danielle Belgrave, and Kyunghyun Cho, editors, {\em Advances in Neural Information Processing Systems}, 2022.

\bibitem[LLH{\etalchar{+}}22]{subspace3}
Xuechen Li, Daogao Liu, Tatsunori Hashimoto, Huseyin~A. Inan, Janardhan Kulkarni, Yin~Tat Lee, and Abhradeep~Guha Thakurta.
\newblock When does differentially private learning not suffer in high dimensions?
\newblock {\em CoRR}, abs/2207.00160, 2022.

\bibitem[LTLH22]{LiTLH22}
Xuechen Li, Florian Tram\`er, Percy Liang, and Tatsunori Hashimoto.
\newblock Large language models can be strong differentially private learners.
\newblock In {\em Proceedings of the 10th International Conference on Learning Representations}, ICLR '22, 2022.

\bibitem[LWAF21]{cvpr}
Zelun Luo, Daniel~J. Wu, Ehsan Adeli, and Li~Fei{-}Fei.
\newblock Scalable differential privacy with sparse network finetuning.
\newblock In {\em {IEEE} Conference on Computer Vision and Pattern Recognition, {CVPR} 2021, virtual, June 19-25, 2021}, pages 5059--5068. Computer Vision Foundation / {IEEE}, 2021.

\bibitem[LZRS22]{dprmprop}
Tian Li, Manzil Zaheer, Sashank Reddi, and Virginia Smith.
\newblock Private adaptive optimization with side information.
\newblock In Kamalika Chaudhuri, Stefanie Jegelka, Le~Song, Csaba Szepesvari, Gang Niu, and Sivan Sabato, editors, {\em Proceedings of the 39th International Conference on Machine Learning}, volume 162 of {\em Proceedings of Machine Learning Research}, pages 13086--13105. PMLR, 17--23 Jul 2022.

\bibitem[MHR21]{compacter}
Rabeeh~Karimi Mahabadi, James Henderson, and Sebastian Ruder.
\newblock Compacter: Efficient low-rank hypercomplex adapter layers.
\newblock In Marc'Aurelio Ranzato, Alina Beygelzimer, Yann~N. Dauphin, Percy Liang, and Jennifer~Wortman Vaughan, editors, {\em Advances in Neural Information Processing Systems 34: Annual Conference on Neural Information Processing Systems 2021, NeurIPS 2021, December 6-14, 2021, virtual}, pages 1022--1035, 2021.

\bibitem[MSH{\etalchar{+}}22]{MohapatraSHKT22}
Shubhankar Mohapatra, Sajin Sasy, Xi~He, Gautam Kamath, and Om~Thakkar.
\newblock The role of adaptive optimizers for honest private hyperparameter selection.
\newblock In {\em Proceedings of the Thirty-Sixth AAAI Conference on Artificial Intelligence}, volume~36 of {\em AAAI '22}, pages 7806--7813, 2022.

\bibitem[MTKC22]{MehtaTKC22}
Harsh Mehta, Abhradeep Thakurta, Alexey Kurakin, and Ashok Cutkosky.
\newblock Large scale transfer learning for differentially private image classification.
\newblock {\em arXiv preprint arXiv:2205.02973}, 2022.

\bibitem[NMT{\etalchar{+}}23]{pmlr-v202-nasr23a}
Milad Nasr, Saeed Mahloujifar, Xinyu Tang, Prateek Mittal, and Amir Houmansadr.
\newblock Effectively using public data in privacy preserving machine learning.
\newblock In Andreas Krause, Emma Brunskill, Kyunghyun Cho, Barbara Engelhardt, Sivan Sabato, and Jonathan Scarlett, editors, {\em Proceedings of the 40th International Conference on Machine Learning}, volume 202 of {\em Proceedings of Machine Learning Research}, pages 25718--25732. PMLR, 23--29 Jul 2023.

\bibitem[NSH19]{mia}
Milad Nasr, Reza Shokri, and Amir Houmansadr.
\newblock Comprehensive privacy analysis of deep learning: Passive and active white-box inference attacks against centralized and federated learning.
\newblock In {\em 2019 IEEE Symposium on Security and Privacy (SP)}, pages 739--753, 2019.

\bibitem[NST{\etalchar{+}}18]{mia_art}
Maria-Irina Nicolae, Mathieu Sinn, Minh~Ngoc Tran, Beat Buesser, Ambrish Rawat, Martin Wistuba, Valentina Zantedeschi, Nathalie Baracaldo, Bryant Chen, Heiko Ludwig, Ian Molloy, and Ben Edwards.
\newblock Adversarial robustness toolbox v1.2.0.
\newblock {\em CoRR}, 1807.01069, 2018.

\bibitem[NWC{\etalchar{+}}11]{svhn}
Yuval Netzer, Tao Wang, Adam Coates, Alessandro Bissacco, Bo~Wu, and Andrew~Y Ng.
\newblock Reading digits in natural images with unsupervised feature learning.
\newblock 2011.

\bibitem[NZ08]{flower}
Maria-Elena Nilsback and Andrew Zisserman.
\newblock Automated flower classification over a large number of classes.
\newblock In {\em Indian Conference on Computer Vision, Graphics and Image Processing}, Dec 2008.

\bibitem[PS22]{PapernotS22}
Nicolas Papernot and Thomas Steinke.
\newblock Hyperparameter tuning with renyi differential privacy.
\newblock In {\em Proceedings of the 10th International Conference on Learning Representations}, ICLR '22, 2022.

\bibitem[SCS13]{dpsgd2}
Shuang Song, Kamalika Chaudhuri, and Anand~D. Sarwate.
\newblock Stochastic gradient descent with differentially private updates.
\newblock In {\em {IEEE} Global Conference on Signal and Information Processing, GlobalSIP 2013, Austin, TX, USA, December 3-5, 2013}, pages 245--248. {IEEE}, 2013.

\bibitem[SSSS17]{membership-infer}
Reza Shokri, Marco Stronati, Congzheng Song, and Vitaly Shmatikov.
\newblock Membership inference attacks against machine learning models.
\newblock In {\em 2017 IEEE Symposium on Security and Privacy (SP)}, pages 3--18, 2017.

\bibitem[Tal14]{talagrand}
Michel Talagrand.
\newblock {\em Upper and lower bounds for stochastic processes}, volume~60.
\newblock Springer, 2014.

\bibitem[TKC22]{TramerKC22}
Florian Tram{\`e}r, Gautam Kamath, and Nicholas Carlini.
\newblock Considerations for differentially private learning with large-scale public pretraining.
\newblock {\em arXiv preprint arXiv:2212.06470}, 2022.

\bibitem[Tsc18]{ham}
Philipp Tschandl.
\newblock {The HAM10000 dataset, a large collection of multi-source dermatoscopic images of common pigmented skin lesions}, 2018.

\bibitem[Tsf24]{dpsubspace}
Eliad Tsfadia.
\newblock On differentially private subspace estimation in a distribution-free setting, 2024.

\bibitem[WD18]{distshiftsurvey2}
Mei Wang and Weihong Deng.
\newblock Deep visual domain adaptation: {A} survey.
\newblock {\em Neurocomputing}, 312:135--153, 2018.

\bibitem[WPL{\etalchar{+}}17]{chestxray}
Xiaosong Wang, Yifan Peng, Le~Lu, Zhiyong Lu, Mohammadhadi Bagheri, and Ronald Summers.
\newblock Chestx-ray8: Hospital-scale chest x-ray database and benchmarks on weakly-supervised classification and localization of common thorax diseases.
\newblock In {\em 2017 IEEE Conference on Computer Vision and Pattern Recognition(CVPR)}, pages 3462--3471, 2017.

\bibitem[WSL{\etalchar{+}}21]{tent}
Dequan Wang, Evan Shelhamer, Shaoteng Liu, Bruno Olshausen, and Trevor Darrell.
\newblock Tent: Fully test-time adaptation by entropy minimization.
\newblock In {\em International Conference on Learning Representations}, 2021.

\bibitem[XRV17]{fmnist}
Han Xiao, Kashif Rasul, and Roland Vollgraf.
\newblock Fashion-mnist: a novel image dataset for benchmarking machine learning algorithms.
\newblock {\em CoRR}, abs/1708.07747, 2017.

\bibitem[YGK{\etalchar{+}}23]{yu2023selective}
Da~Yu, Sivakanth Gopi, Janardhan Kulkarni, Zinan Lin, Saurabh Naik, Tomasz~Lukasz Religa, Jian Yin, and Huishuai Zhang.
\newblock Selective pre-training for private fine-tuning, 2023.

\bibitem[YNB{\etalchar{+}}22]{YuNBGIKKLMWYZ22}
Da~Yu, Saurabh Naik, Arturs Backurs, Sivakanth Gopi, Huseyin~A Inan, Gautam Kamath, Janardhan Kulkarni, Yin~Tat Lee, Andre Manoel, Lukas Wutschitz, Sergey Yekhanin, and Huishuai Zhang.
\newblock Differentially private fine-tuning of language models.
\newblock In {\em Proceedings of the 10th International Conference on Learning Representations}, ICLR '22, 2022.

\bibitem[YZCL21]{donot}
Da~Yu, Huishuai Zhang, Wei Chen, and Tie{-}Yan Liu.
\newblock Do not let privacy overbill utility: Gradient embedding perturbation for private learning.
\newblock In {\em 9th International Conference on Learning Representations, {ICLR} 2021, Virtual Event, Austria, May 3-7, 2021}. OpenReview.net, 2021.

\bibitem[Zha19]{distshiftsurvey4}
Lei Zhang.
\newblock Transfer adaptation learning: {A} decade survey.
\newblock {\em CoRR}, abs/1903.04687, 2019.

\bibitem[ZQD{\etalchar{+}}19]{distshiftsurvey3}
Fuzhen Zhuang, Zhiyuan Qi, Keyu Duan, Dongbo Xi, Yongchun Zhu, Hengshu Zhu, Hui Xiong, and Qing He.
\newblock A comprehensive survey on transfer learning.
\newblock {\em CoRR}, abs/1911.02685, 2019.

\bibitem[ZWB21]{bypass}
Yingxue Zhou, Steven Wu, and Arindam Banerjee.
\newblock Bypassing the ambient dimension: Private {\{}sgd{\}} with gradient subspace identification.
\newblock In {\em International Conference on Learning Representations}, 2021.

\end{thebibliography}
\bibliographystyle{alpha}

\newpage
\appendix
\section{Missing Preliminaries}     \label{misspre}
\paragraph{Definition 4 (($\rho, \eta$)-close).} \textit{A randomized algorithm $\mathcal{A}(.)$ that outputs an approximate distance between to subspaces span($V_1$) and span($V_2$), $\hat{d}\left(V_1, V_2\right)$, is an \emph{($\rho, \eta$)-close} approximation to the true subspace distance $d\left(V_1, V_2\right)$, if they satisfy:}
$$
    \Pr\left[|\hat{d}\left(V_1, V_2\right) - d\left(V_1, V_2\right)| \le \rho \right] \geq 1 - \eta.
$$

\paragraph{Projected DP-SGD} Our theoretical analysis is based on Projected DP-SGD~\cite{bypass}, a private learning algorithm that leverages public data for gradient preconditioning. 
Projected DP-SGD makes one modification to standard DP-SGD. At each step, it projects the noisy gradient vector onto the $k$-dimensional subspace defined by $\Pi^{pub}_k$, i.e., $\noisy{\vg}_t := \Pi^{pub}_k\noisy{\vg}_t$. Algorithm pseudo code is presented in Algorithm~\ref{algo:projected}.

\begin{algorithm}[htbp]
    \caption{Projected DP-SGD}
    \label{algo:projected}
    \textbf{Input:} Private dataset  $X_{priv}$, $m$ public examples $x_{pub}$, loss function $\ell$, model weights $\vw_0$, dimension $k$, learning rate $\eta$, number of iterations $T$, noise multiplier $\sigma$, clip norm $C$
    
    \textbf{Output:} Differentially private model $\vw_T$

    \begin{algorithmic}[1]
        \FOR {$t=1 \cdots T$}
            \STATE{$\mM_t = \frac{1}{m}\sum_{i}{\nabla \ell(x_{pub}[i]; \vw_t)}\nabla \ell(x_{pub}[i]; \vw_t)^T$}
            \STATE {\texttt{// Compute the top-$k$ orthogonal projector of $\mM_t$}}
            \STATE {$\Pi_k^{pub} = V_k(t)V_k(t)^T$ where $V_k(t)$ is the top-$k$ eigenspace of $\mM_t$}
            \STATE {\texttt{// Do standard DP-SGD}}
            \STATE {$\vg_t = \frac{1}{|B_t|}\sum_{x_i \in B_t}\nabla\ell(x_i; \vw_t)$, with $B_t$ uniformly sampled from $X_{priv}$ with replacement }
            \STATE {\texttt{// Project noisy gradient using projector}}
            \STATE {$\noisy{\vg}_t = \Pi_k^{pub}(\vg_t + \vz_t)$, where $\vz_t \sim \N(0, C^2\sigma^2\mI_p)$}
            \STATE {\texttt{// Update parameter using projected noisy gradient}}
            \STATE {$\vw_{t+1} = \vw_t - \eta\noisy{\vg}_t$}
        \ENDFOR
    \end{algorithmic}
\end{algorithm}

\paragraph{Gradient Embedding Perturbation (GEP)}
Our theoretical analysis is based on GEP, the state-of-the-art private learning algorithm that leverages public data. Here we briefly introduce their algorithm. GEP involves three steps: 1) it computes a set of the orthonormal basis for the lower-dimensional subspace; 2) GEP projects the private gradients to the subspace derived from step 1, thus dividing the private gradients into two parts: embedding gradients that contain most of the information carried by the gradient, and the remainder are called residual gradients; 3) GEP clips two parts of the gradients separately and perturbs them to achieve differential privacy.
Algorithm pseudo code is presented in Algorithm~\ref{gep}.

\begin{algorithm}[h]
    \caption{Gradient Embedding Perturbation (GEP)}
    \label{gep}
    \textbf{Input:} Private dataset  $X_{priv}$, public examples $x_{pub}$, loss function $\mathcal{L}$, model weights $\boldsymbol{\theta}_0$, dimension $k$, learning rate $\eta$, number of iterations $T$, noise multiplier $\sigma_1, \sigma_2$, clip norm $S_1, S_2$
    \textbf{Output:} Differentially private model $\boldsymbol{\theta}_T$

    \begin{algorithmic}[1]
        \FOR {$t=1 \cdots T$}
            \STATE Compute per-sample gradient matrix $G^{priv}_t$ and public gradient matrix $G^{pub}_t$
            
            \STATE \texttt{// Compute an orthonormal basis for the public subspace}
            \STATE {Initialize $V^{pub}_k \in \mathbb{R}^{k \times p}$ randomly.}
            \FOR {$i=1 \cdots T_{power}$}
                \STATE Compute $A = G^{pub}_tV^{pub\top}_k$ and $V^{pub}_k = A^{\top}V^{pub\top}_k$
                \STATE Orthogonalize $V^{pub}_k$ and normalize row vectors. 
            \ENDFOR
            \STATE Delete $G^{pub}_t$ to free memory.
        
            \STATE \texttt{// Project the private gradients onto public subspace}
            \STATE Compute gradient embedding $W_t = G^{priv}_tV^{pub\top}_{k, t}$ and clip its rows with $S_1$ to obtain $\hat{W}$.
            \STATE Compute residual gradients $R_t = G^{priv}_t - W_tV^{pub}_{k, t}$ and clip its rows with $S_2$ to obtain $\hat{R}$.
        
            \STATE \texttt{// Perturb gradient embedding and residual gradient separately}
            \STATE Perturb embedding with noise $\boldsymbol{z}_t^{(1)} \sim \mathcal{N}\left(0, \sigma_1^2 \displaystyle \mI_k \right)$: $w_t:= $sum over rows of $\hat{W}_t$, $\hat{w}_t := w_t + \boldsymbol{z}_t^{(1)}$
            \STATE Perturb residual gradient with noise $\boldsymbol{z}_t^{(2)} \sim \mathcal{N}\left(0, \sigma_2^2 \displaystyle \mI_k\right)$: $r_t:= $sum over rows of $\hat{R}_t$, $\hat{r}_t := r_t + \boldsymbol{z}_t^{(2)}$
            \STATE $\hat{v}_t := (\hat{w}_t^{\top}V^{pub}_k + \hat{r}_t)/n$
        
            \STATE \texttt{// Update weights}
            \STATE $\boldsymbol{\theta}_{t + 1} = \boldsymbol{\theta}_t - \eta\hat{v}_t$
        \ENDFOR
    \end{algorithmic}
\end{algorithm}

\section{An Ablation Study}  \label{app:kablation}
\begin{figure*}[!h]
    \centering
    \includegraphics[width=0.9\linewidth]{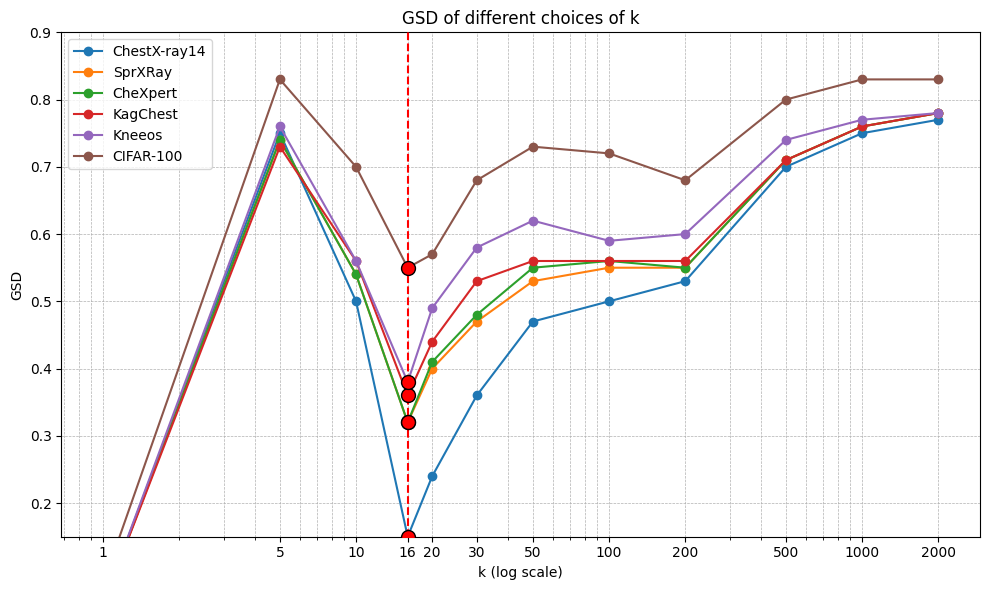}
    \caption{An ablation study on the impact of varying $k$ on GSD. following the experimental settings in \Cref{eval2}. In this paper, all GSDs use $k=16$, which serves as a reference point (indicated by a red dashed line).}
    \label{fig:k-ablation}
\end{figure*}

In this section, we discuss an important parameter of GSD, $k$, the dimension of lower subspace. We begin by offering insights into optimal $k$ values, followed by an ablation study on how varying $k$ affects GSD.

In~\Cref{thm:uniform}, we prove that the GSD is almost uniform over training, provided that the singular value gap of the gradient matrix is large. The singular value gap is the difference between the $k$-th and $k+1$-th singular values. Therefore, it becomes evident that we want to choose a $k$ that consistently results in a large singular value gap throughout training. This theorem, combined with theoretical and empirical evidence showing that gradients remain in a low-dimensional subspace (indicated by a skewed singular value distribution), suggests that the "elbow point" of these singular values is the optimal choice for $k$ (see \Cref{fig:eigenvalue}).

To validate this intuition from analytical results, we also conduct an ablation study to observe how GSD will be affected by varying $k$ (see~\Cref{fig:k-ablation}). The study confirms our theoretical intuitions: a good choice of $k$ lies within a specific range, corresponding to the "elbow point" of the singular values. In practice, this elbow point can be estimated using public data. In this paper, we focus on a single value of $k=16$
throughout our experiments.

\section{Private Distance Measurement}  \label{app:privgsd}
While Algorithm~\ref{myalgo} has relatively low data exposure (requiring only a single batch of private examples), it is not differentially private. 
In this section, we give a general algorithm that computes GSD differentially privately: Differentially Private Gradient Subspace Distance (DP-GSD, Algorithm \ref{dpgsd}). As GSD needs top-$k$ singular vectors from private examples, we derive these singular vectors in a differentially private manner, and the rest of the algorithm remains DP because of post-processing.

\begin{algorithm}[htbp]
    \caption{Differentially Private Gradient Subspace Distance (DP-GSD)}
    \label{dpgsd}

    \textbf{Input:} $m$ private examples  $x_{priv}$, $m$ public examples $x_{pub}$, loss function $\mathcal{L}$, model weights $\mathbf{w}_0$, dimension $k$, privacy parameter $\epsilon, \delta$, clip norm $c$ \\
    \textbf{Output:} Distance between two image datasets $\boldsymbol{d}$

    \begin{algorithmic}[1]
        \STATE \texttt{// Compute per-sample gradient matrix for private and public examples}
        \STATE $G_{priv} = \nabla \mathcal{L}(\mathbf{w}_0, x_{priv})$
        \STATE $G_{pub} = \nabla \mathcal{L}(\mathbf{w}_0, x_{pub})$
        \STATE \texttt{// \textbf{Privately} compute top-$k$ subspace of the private gradient matrix}
        \STATE Clip per-row: $G_{priv} = \mathbf{Clip}(G_{priv}, c)$
        \STATE Compute $V_{k}^{priv} \leftarrow \mathbf{DPPCA}(G_{priv}, k, \epsilon, \delta)$ 
        \STATE \texttt{// Compute top-$k$ subspace of the public gradient matrix}
        \STATE $U^{pub}, S^{pub}, V^{pub} \leftarrow \mathbf{SVD}(G_{pub})$
        \STATE \texttt{// Compute the distance between two subspaces}
        \STATE $\boldsymbol{d} = \mathbf{ProjectionMetric}(V_{k}^{priv}, V_{k}^{pub})$
    \end{algorithmic}
\end{algorithm}

At a high level, DP-GSD makes one adaptation to GSD: we compute top-$k$ subspace of the private per-sample gradient matrix in a differentially private manner. $\mathbf{DPPCA}$ in line 6 of Algorithm \ref{dpgsd} can be any Differentially Private Principal Component Analysis (DPPCA), e.g., input perturbation \cite{dppca1}, subspace perturbation \cite{analysegauss}, exponential mechanism \cite{dppca1} and stochastic methods \cite{liu2022dppca}. 

We give a theoretical analysis of the privacy and utility guarantee of DP-GSD based on the implementation of DPPCA ~\cite{dppca1}, given in Algorithm \ref{ppca}.
\begin{algorithm}[htbp]
    \caption{Differentially Private Principal Component Analysis (DPPCA)}
    \label{ppca}
    \textbf{Input:} $m \times p$ data matrix $X$, dimension $k$, privacy para $\epsilon$ \\
    \textbf{Output:} $\hat{V}_k$: $k$-subspace of $X$
    \begin{algorithmic}[1]
        \STATE Set $A = \frac{1}{m}X^{\top}X$
        \STATE Sample $\hat{V}_k = \mathbf{BMF}\left( \frac{m\epsilon}{2}A\right)$
    \end{algorithmic}
\end{algorithm}

To achieve DP, DPPCA randomly samples a $k$-dimensional distribution from the matrix Bingham distribution, which has the following density function:
\begin{equation}
    f(V | A, k, p)=\frac{1}{{ }_1 F_1\left(\frac{1}{2} k, \frac{1}{2} p, A\right)} \exp \left(\operatorname{tr}\left(V^T A V\right)\right)
\end{equation}
where $V$ is the $p \times k$ subspace and ${ }_1 F_1\left(\frac{1}{2} k, \frac{1}{2} p, A\right)$ is a normalization factor. We use $\mathbf{BMF}(V)$ in Algorithm \ref{ppca} to denote this distribution. Thus, we have the following privacy and utility guarantees (proofs in Appendix~\ref{proofs}):

\begin{theorem}
    \textbf{(Privacy Guarantee)} Let Algorithm \ref{ppca} be an implementation of $\mathbf{DPPCA}$ in DP-GSD, then DP-GSD is $\epsilon/c^2$-differentially priate.
    \label{privacy}
\end{theorem}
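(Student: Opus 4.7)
The plan is to reduce the entire privacy analysis of DP-GSD to the guarantee of the DPPCA subroutine via post-processing. In Algorithm~\ref{dpgsd}, after the deterministic per-row clipping on line 5, the only step that touches the private data is the call $V_k^{priv} \leftarrow \mathbf{DPPCA}(G_{priv}, k, \epsilon, \delta)$. All subsequent computations---the gradient and SVD of $G_{pub}$ and the $\mathbf{ProjectionMetric}$ evaluation---depend on the private data only through $V_k^{priv}$. Hence, once DPPCA on the clipped matrix is shown to be differentially private at the claimed level, the post-processing property immediately lifts this to the same guarantee for DP-GSD. The clipping step itself is a deterministic preprocessing that respects the neighboring relation (swapping one row of $G_{priv}$ yields at most one clipped row change), so it carries no additional privacy cost.

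Next, I would analyze DPPCA as an instance of the exponential mechanism on the Stiefel manifold. Sampling $\hat V_k \sim \mathbf{BMF}(\tfrac{m\epsilon}{2} A)$ with $A = \tfrac{1}{m} X^\top X$ is precisely the exponential mechanism with score $q(X, V) = \tfrac{1}{m} \operatorname{tr}(V^\top X^\top X V) = \tfrac{1}{m} \sum_i \|V^\top x_i\|_2^2$ and inverse temperature $\tfrac{m\epsilon}{2}$. The key sensitivity computation is that on neighboring inputs that differ in a single row whose $\ell_2$-norm is at most $c$ (guaranteed by the clip), the term $\|V^\top x_i\|_2^2 \leq \|x_i\|_2^2 \leq c^2$, so the score $q$ has neighboring-sensitivity $c^2/m$. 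Plugging this into the standard exponential-mechanism guarantee gives the privacy parameter as (twice) the product of the inverse temperature and the sensitivity. I would then invoke the matrix Bingham privacy analysis of \cite{dppca1} as a black box to handle the normalization constants cleanly and produce the stated $\epsilon/c^2$ factor.

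The most delicate step will be matching the constant. A naive application of the exponential mechanism gives a factor that scales polynomially in $c$ and $\epsilon$, and the precise form $\epsilon/c^2$ depends on the parameterization convention of DPPCA in \cite{dppca1}---in particular, whether the input privacy parameter is pre-normalized to assume unit-norm rows. I would therefore state the sensitivity computation explicitly, and then cite the appropriate instantiation of their theorem so that the final $\epsilon/c^2$ bound emerges from combining the row-sensitivity $c^2$ with the inverse temperature in the form used there; the composition with the public side of the algorithm is then immediate by post-processing.
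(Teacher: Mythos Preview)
Your proposal is correct and follows essentially the same approach as the paper: reduce DP-GSD to DPPCA via post-processing, cast DPPCA as the exponential mechanism with score $\operatorname{tr}(V^\top X^\top X V)$, bound the sensitivity using the per-row clip $\|x_i\|_2 \le c$, and invoke the Bingham-distribution privacy result of \cite{dppca1}. Your caution about the precise constant is well placed---the paper's proof handles it by direct appeal to Theorem~6 of \cite{dppca1} after showing $\Delta q \le c^2$, exactly as you anticipate.
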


\begin{theorem}
    \textbf{(Utility Guarantee)} Under the assumption that $\mathcal{L}(\mathbf{w}, x)$ is 1-Lipschitz, let Algorithm \ref{ppca} be an implementation of $\mathbf{DPPCA}$ in DP-GSD, then for $k = 1$, the distance given by DP-GSD, $\hat{d}(V^{priv}_k, V^{pub}_k)$ is $(\rho, \eta)$-close to the distance given by GSD, $d(V^{priv}_k, V^{pub}_k)$, if we have

    \begin{equation}
        m>\frac{pc^2}{\epsilon \widetilde{\lambda}(1-\sqrt{1 - \rho ^ 2})}\left(4 \frac{\log (1 / \eta)}{p}+2 \log \frac{8 \lambda_1}{\rho^2 \widetilde{\lambda}}\right)
    \end{equation}
    where $\lambda_1$ is the top eigenvalue, $\lambda = \lambda_1 - \lambda_2$ is the eigen-gap, $ \widetilde{\lambda} = \lambda + 2|c^2 - 1|\sigma_1^2$, $p$ is the model dimension, $c$ is clip norm and $\epsilon$ is the privacy parameter.
    \label{utility}
\end{theorem}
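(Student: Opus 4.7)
The plan is to reduce the utility analysis of DP-GSD to the utility of its DPPCA subroutine via the triangle inequality of the projection metric. Let $\hat V_k^{priv}$ denote the output of DPPCA on line 6, while $V_k^{priv}$ is the true top-$k$ right singular space of $G_{priv}$. Because the projection metric of Definition \ref{projmetricdef} is a genuine metric on the Grassmannian, the triangle inequality gives
\begin{equation*}
\bigl|\hat d(V_k^{priv}, V_k^{pub}) - d(V_k^{priv}, V_k^{pub})\bigr| \;\le\; d(\hat V_k^{priv}, V_k^{priv}),
\end{equation*}
so it is enough to show that $d(\hat V_k^{priv}, V_k^{priv}) \le \rho$ with probability at least $1-\eta$. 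Since only line 6 touches the private data, the overall distance output is DP by post-processing, and only this step needs a utility analysis.

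Specializing to $k=1$ and writing $v_1, \hat v_1$ for the true and sampled top unit vectors, the projection metric collapses to $d(\hat v_1, v_1) = \sqrt{1-(\hat v_1^\top v_1)^2}$ (taking signs so that the inner product is nonnegative, which is harmless because a line is invariant under negation). Requiring this to be at most $\rho$ is equivalent to requiring $\hat v_1^\top v_1 \ge \sqrt{1-\rho^2}$, i.e., $\hat v_1^\top v_1 \ge 1 - \tau$ for the choice $\tau := 1 - \sqrt{1-\rho^2}$. I would then invoke the Bingham-sampler utility guarantee of \cite{dppca1}: drawing $\hat v_1$ from the matrix Bingham with parameter $\tfrac{m\epsilon'}{2}A$ on the rank-$k{=}1$ PSD matrix $A = \tfrac{1}{m}G_{priv}^\top G_{priv}$ achieves $\hat v_1^\top v_1 \ge 1-\tau$ with probability at least $1-\eta$ whenever
\begin{equation*}
m \;>\; \frac{p}{\epsilon' \alpha \tau}\left(4\,\frac{\log(1/\eta)}{p} + 2\log\frac{8\lambda_1}{\tau\alpha}\right),
\end{equation*}
where $\alpha=\lambda_1-\lambda_2$ is the eigengap of $A$.

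To finish I would plug in the sensitivity scaling forced by the per-row clipping in line 5. Since the rows of $G_{priv}$ are clipped to $\ell_2$ norm $c$, the spectral sensitivity of $A$ grows by a factor $c^2$, so in order to achieve $\epsilon$-DP the effective exponent parameter of the Bingham sampler must be taken as $\epsilon' = \epsilon/c^2$, exactly matching the accounting behind Theorem \ref{privacy}. Substituting $\epsilon' = \epsilon/c^2$ and $\tau = 1-\sqrt{1-\rho^2}$ into the threshold above produces the leading factor $\tfrac{pc^2}{\epsilon\alpha(1-\sqrt{1-\rho^2})}$ verbatim. Inside the log, I would use the elementary bound $1 - \sqrt{1-\rho^2} \ge \rho^2/2$ (equivalently $\sqrt{1-\rho^2} \le 1-\rho^2/2$) to replace $\log(8\lambda_1/\tau\alpha)$ by $\log(8\lambda_1/\rho^2\alpha)$ up to a constant absorbed into the stated form, yielding exactly the claimed sample-complexity inequality.

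The trigonometric reduction from eigenvector fidelity to projection metric and the $c^2$ sensitivity rescaling are mechanical, so the main obstacle is step three: carefully citing the Chaudhuri et al.\ Bingham utility bound in a form that exposes both the eigengap $\alpha$ and the spectral scale $\lambda_1$ with the correct $1/\tau$ dependence, and verifying that the implicit constants line up with the theorem's $8\lambda_1/\rho^2\alpha$ log argument. A secondary care point is confirming that negating $\hat v_1$ to get $\hat v_1^\top v_1 \ge 0$ preserves the high-probability event, which is immediate because the Bingham density is even in $V$.
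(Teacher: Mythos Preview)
Your approach is exactly the paper's: triangle inequality for the projection metric reduces the question to $d(\hat V_1^{priv},V_1^{priv})\le\rho$, which for $k=1$ is the correlation condition $|\langle \hat v_1,v_1\rangle|\ge\sqrt{1-\rho^2}$, and then the Chaudhuri et al.\ Bingham utility bound is invoked with the $\epsilon\to\epsilon/c^2$ rescaling from clipping. One small correction that removes your approximation step: in the cited bound the log denominator is $(1-\mu^2)\alpha$, not $\tau\alpha$ (where $\mu$ is the correlation threshold and $\tau=1-\mu$), so after substituting $\mu=\sqrt{1-\rho^2}$ you get $1-\mu^2=\rho^2$ \emph{exactly}, and the inequality in the theorem follows verbatim without the $1-\sqrt{1-\rho^2}\ge\rho^2/2$ maneuver.
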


\section{Missing Proofs}
\label{proofs}
In this section, we present proofs for Section \ref{riskanalysis}.

\paragraph{Lemma \ref{lemma1} (Reconstruction Error)} 
Let $\mR_t = \mG^{priv}_t - \mG^{priv}_t\Pi^{pub}_{k,t}$ be the reconstruction error at step $t$, then we have:
\begin{align*}
    \left\| \mathbf{R}_t \right\|_F \le \sqrt{2}s_{1, t}\mathbf{GSD}_t + \sum_{i=k+1}^p s_{i,t}
\end{align*}
where $s_{1, t} \ge ...\ge s_{k, t} \ge...$ are the singular values of $\mathbf{G}_{p r i v}$, $\mathbf{GSD}_t = \mathbf{GSD}(x_{pub}, x_{priv}; \vw_t)$ is the gradient subspace distance given by our algorithm when the gradients are taken at $\vw_t$.

\begin{proof} 
Note that all notations in this proof represent variables at step $t$. We may simplify the notation by omitting $t$. 

\begin{align*} 
    \mathbf{R} &= \mathbf{G}_{p r i v}-\mathbf{G}_{p r i v} \Pi_k^{pub} \\
    &= \mathbf{G}_{p r i v} - \mathbf{G}_{p r i v}\Pi_k^{priv} + \mathbf{G}_{p r i v}\Pi_k^{priv} - \mathbf{G}_{p r i v} \Pi_k^{pub}
\end{align*}
\begin{align*}
\Rightarrow \quad\left\| \mathbf{R} \right\|_F^2 \le (\underbrace{\left\| \mathbf{G}_{p r i v} (\Pi_k^{pub} - \Pi_k^{priv}) \right\|_F}_{D_1} \\
+ \underbrace{\left\| \mathbf{G}_{p r i v}\left(\mathbb{I} - \Pi_k^{priv}\right)\right\|_F}_{D_2})^2
\end{align*}
where $\Pi_k^{pub} = V_k^{p u b} V_k^{p u b \top}$ denotes the orthogal projection to the subspace of span($V_k^{p u b}$), $\Pi_k^{priv} = V_k^{priv} V_k^{priv \top}$ denotes the orthogal projection to the subspace of span($V_k^{priv}$).

For $D_2$, recall that the Eckart–Young–Mirsky theorem \citep{young} shows that the best rank-$k$ approximation of $\mathbf{G}_{p r i v}$ is given by its top-$k$ reconstruction using SVD. Therefore, we have

\begin{align*}
    D_2 &= \left\| \mathbf{G}_{p r i v}\left(\mathbb{I} - \Pi_k^{priv}\right)\right\|_F \\
    & = \left\|\sum_{i=1}^p s_i u_i v_i^{\top}-\sum_{i=1}^k s_i u_i v_i^{\top}\right\|_F \\
    & =\left\|\sum_{i=k+1}^p s_i u_i v_i^{\top}\right\|_F \\
    & = \sum_{i=k+1}^p s_i
\end{align*}

For $D_1$, the definition of projection metric (Definition~\ref{projmetricdef}) shows that
\begin{align*}
    \mathbf{GSD}^2 & = k - (\cos^2\theta_1 + ... + \cos^2\theta_k) \\
    & \stackrel{(a)}{=} k - \operatorname{Tr}\left( V_k^{priv \top}V_k^{priv}V_k^{p u b \top} V_k^{p u b}\right) \\
    & \stackrel{(b)}{=} \frac{1}{2} \left\| \Pi_k^{pub} - \Pi_k^{priv} \right\|_F^2
\end{align*}

(a) and (b) hold according to Equation 5.4 of \cite{projmetric3}.

Therefore, we have
\begin{align*}
    D_1 & = \left\| \mathbf{G}_{p r i v} (\Pi_k^{pub} - \Pi_k^{priv}) \right\|_F \\
    & \le \left\| \mathbf{G}_{p r i v} \right\|_2 \left\| \Pi_k^{pub} - \Pi_k^{priv} \right\|_F \\
    & = s_{1} \left\| \Pi_k^{pub} - \Pi_k^{priv} \right\|_F \\
    & = \sqrt{2}s_{1} \mathbf{GSD}
\end{align*}

Combining $D_1$ and $D_2$, we have
\begin{align*}
\left\| \mathbf{R} \right\|_F & \le D_1 + D_2 \\
& = \sqrt{2}s_{1} \mathbf{GSD} + \sum_{i=k+1}^p s_i
\end{align*}

Hence we know that GSD bounds the reconstruction error at step $t$.
\end{proof}

\paragraph{Theorem \ref{theorem1} (Excess Risk)} \label{app:theorem1}
For Projected DP-SGD \cite{bypass}, assume that the loss $L(\mathbf{w})$ is $L$-Lipschitz, convex, and $\beta$-smooth, and the population gradient matrix is rank-$k$. Let $\vw^*$ be the minima of $L(\vw)$, with $(\epsilon, \delta)$-DP, $T = O(n^2\epsilon^2)$, step size $\eta_t=1/\sqrt{T}$, the excess risk of  Projected DP-SGD obeys
\begin{align*}
       \E[L(\overline{\vw})]-L(\vw^*) \leq O\left(\frac{kL^2}{n\epsilon}\right) + O(L\Lambda)
\end{align*}
where $\Lambda=\frac{\|\mR_t\|_F}{\sqrt{\sum_{i}s_{i, t}^2}}$, $\mR = \mG^{priv} - \mG^{priv}\Pi^{pub}$ is the reconstruction error at step $t$, and $s_{1, t} \geq \dots \geq s_{k, t}$ is the singular values of the per-sample gradient matrix $\mG^{priv}_t$ at step $t$.

\begin{proof}
   At step $t$, recall that the reconstruction error $ \mathbf{R} = \mathbf{G}_{p r i v}-\mathbf{G}_{p r i v} \Pi_k^{pub} \in \R^{m \times p}$, let $\Delta_t = \Pi_k^{pub}\vg_t - \vg_t$, where $\vg_t = \sum_{i}\mR[i,:]$. Then we have:
   \begin{align*}
       \|\Delta_t\|_2 &= \|\Pi_k^{pub}\vg_t - \vg_t\|_2 \\
           &= \|\vg_t(\mI - \Pi_k^{pub})\|_2 \\
           &= \|\vg_t^{\perp}\|_2\\
           &\leq \|\vg_t\|_2\sin(\theta_{\max})
   \end{align*}
   where $\theta_{\max}$ is the largest principle angle between subspaces of $\mathbf{G}_{p r i v}$ and $\Pi_k^{pub}$.
   \begin{align*}
       \|\mR\|_F &= \|\mathbf{G}_{p r i v}-\mathbf{G}_{p r i v} \Pi_k^{pub}\|_F \\
           &= \|\mathbf{G}_{p r i v}(\mI - \Pi_k^{pub})\|_F  \\
           & \leq \|\mG_{p r i v}\|_F\|(\mI - \Pi_k^{pub})\|_2 \\
           &= \|\mG_{p r i v}\|_F\sin(\theta_{\max})
   \end{align*}
   Putting together, we have:
   \begin{align*}
       \|\Delta_t\|_2 &\leq \|\vg_t\|_2\frac{ \|\mR\|_F}{\|\mG_{p r i v}\|_F} \\
          &\leq \frac{L}{\sqrt{\sum_{i}s_i^2}}\|\mR_t\|_F
   \end{align*}
   Substitute back to Theorem 5 of \cite{bypass}, we have:
   \begin{align*}
       \E[L(\overline{\vw})]-L(\vw^*) \leq O\left(\frac{kL^2}{n\epsilon}\right) + O(L\frac{\|\mR_t\|_F}{\sqrt{\sum_{i}s_i^2}})
\end{align*}
That completes the proof.
\end{proof}

\paragraph{Theorem \ref{thm:uniform} (Almost Unifrom)} For Projected DP-SGD \cite{bypass}, assume that the loss $L(\mathbf{w})$ is $L$-Lipschitz, convex and $\beta$-smooth, the private gradient matrix at step $t$ is at most rank $r$, we have:
\begin{align*}
    &\E[|\mathbf{GSD}_{t} - \mathbf{GSD}_0|] \\
    &\leq \sqrt{r/2}\beta\sqrt{(L^2 + k\sigma^2)}\sum_{i=1}^t(\frac{\eta_t}{\alpha^{pub}_{k, t}} + \frac{\eta_t}{\alpha^{priv}_{k, t}})
\end{align*}
$\eta_t$ is the learning rate at step $t$, $k$ is subspace dimension, $\sigma$ is noise scale, $\alpha^{priv}_{k, t} = s^{priv}_{k, t} - s^{priv}_{k+1, t}$ is the singular value gap between $k$-th and $k+1$-th of per-sample gradient matrix $\mG^{priv}_t$ at step $t$.

\begin{proof}
    At step $t$, the private gradient matrix is composed by per-sample gradient vector, i.e., 
    \begin{align*}
        \mG^{priv}_t = [\vg_1(\vw_t), \dots, \vg_m(\vw_t)]^T
    \end{align*}
    so that we know the Frobenius norm difference of gradient matrix at step $t$ and $t+1$ is:
    \begin{align*}
        \|\mG^{priv}_{t+1} - \mG^{priv}_t\|_F &\leq \sqrt{r}\|\vg(\vw_{t+1}) - \vg(\vw_{t})\|_2 \\
         &\leq \sqrt{r}\beta\|\vw_{t+1} - \vw_{t}\|_2
    \end{align*}
    Given the update rule of Projected DP-SGD, and take the expectation, we have:
    \begin{align*}
        \E[\|\vw_{t+1} - \vw_{t}\|_2] &= \E[\|\eta_t\noisy{\vg}_t\|_2] \\
          &= \eta_t\E[\|\Pi_{k, t}^{pub}(\vg_t + \vz_t)\|_2] \\
          &= \eta_t\E[\|\Pi_{k, t}^{pub}(\vg_t + \vz_t)\|_2] \\
          &\leq \eta_t\sqrt{(L^2 + k\sigma^2)}
    \end{align*}
    where $\vz \sim \N(0, \sigma^2\mI)$. So that we know:
    \begin{align*}
        \E[\|\mG^{priv}_{t+1} - \mG^{priv}_t\|_F] 
         & \leq \sqrt{r}\beta\E[\|\vw_{t+1} - \vw_{t}\|_2] \\
         & \leq \sqrt{r}\beta\eta_t\sqrt{(L^2 + k\sigma^2)}
    \end{align*}
    Applying Davis-Kahan theorem \cite{daviskahan}, we have:
    \begin{align*}
        \|\Pi^{pub}_{k, t+1} - \Pi^{pub}_{k, t}\|_F \leq \frac{\|\mG^{pub}_{t+1} - \mG^{pub}_t\|_F}{\alpha^{pub}_{k, t}} \\
        \|\Pi^{priv}_{k, t+1} - \Pi^{priv}_{k, t}\|_F \leq \frac{\|\mG^{priv}_{t+1} - \mG^{priv}_t\|_F}{\alpha^{priv}_{k, t}}
    \end{align*}
    where $\alpha^{priv}_{k, t} = s^{priv}_{k, t} - s^{priv}_{k+1, t}$ is the singular value gap of $\mG^{priv}_t$, etc. From Lemma \ref{lemma1}, we know that:
    \begin{align*}
        \mathbf{GSD}_t^2 = \frac{1}{2}\|\Pi^{pub}_{k, t} - \Pi^{priv}_{k, t}\|_F^2
    \end{align*}
    Putting together, we have:
    \begin{align*}
        &\E[|\mathbf{GSD}_{t+1} - \mathbf{GSD}_t|] \\
          &= \frac{\sqrt{2}}{2}\E[|\|\Pi^{pub}_{k, t+1} - \Pi^{priv}_{k, t+1}\|_F - \|\Pi^{pub}_{k, t} - \Pi^{priv}_{k, t}\|_F|] \\
          &\leq \frac{\sqrt{2}}{2} \E[\|(\Pi^{pub}_{k, t+1} - \Pi^{pub}_{k, t})-(\Pi^{priv}_{k, t+1} - \Pi^{priv}_{k, t})\|_F] \\
          &\leq \frac{\sqrt{2}}{2}\left(\E[\|\Pi^{pub}_{k, t+1} - \Pi^{pub}_{k, t})\|_F] + \E[\|\Pi^{priv}_{k, t+1} - \Pi^{priv}_{k, t})\|_F]\right) \\
          &\leq \frac{\sqrt{2}}{2}\left(\frac{\E[\|\mG^{pub}_{t+1} - \mG^{pub}_t\|_F]}{\alpha^{pub}_{k, t}} + \frac{\E[\|\mG^{priv}_{t+1} - \mG^{priv}_t\|_F]}{\alpha^{priv}_{k, t}}\right) \\
          &\leq \frac{\sqrt{2}\sqrt{r}\beta\eta_t\sqrt{(L^2 + k\sigma^2)}(\alpha^{pub}_{k, t} + \alpha^{priv}_{k, t})}{2\alpha^{pub}_{k, t}\alpha^{priv}_{k, t}}
    \end{align*}
    Summing over step $1, \dots, t$, we have:
    \begin{align*}
        &\E[|\mathbf{GSD}_{t} - \mathbf{GSD}_0|] \\
        &\leq \sqrt{r/2}\beta\sqrt{(L^2 + k\sigma^2)}\sum_{i=1}^t(\frac{\eta_t}{\alpha^{pub}_{k, t}} + \frac{\eta_t}{\alpha^{priv}_{k, t}})
    \end{align*}
    That completes the proof.
\end{proof}

\begin{lemma}[\textbf{Lemma 2.2.3} in \cite{talagrand}] 
\label{thm:prob2exp}
    A random variable $\ry \geq 0$ which satisfies
    \begin{align*}
        \forall u > 0, \Pr[\ry \geq u] \leq A\exp(-\frac{u^2}{B^2})
    \end{align*}
    for certain $A \geq 2$ and $B > 0$. Then we have:
    \begin{align*}
        \E[\ry] \leq CB\sqrt{\log A}
    \end{align*}
    where $C$ is a universal constant.
\end{lemma}

\begin{lemma}[\textbf{Lemma 4.4} in \cite{dpsubspace}] 
\label{thm:dpsubspace}
    Let $s_{1,t} \geq s_{2,t} \geq \dots \geq s_{k,t} \geq \dots \geq s_{n,t}$ be the singular values of per-sample gradient matrix of all private samples at step $t$, denoted by $\mA_t \in \R^{n \times p}$ and $s_{k,t}^2 \geq 0.01n/k$. Let $\mG_{1, t}, \mG_{2, t}$ be two $m$-size subset of $\mA_t$ by uniformly sampling from $\mA_t$ without replacement. Let $\Pi_{k, t}^1, \Pi_{k, t}^2$ be the top-$k$ orthogonal projectors of $\mG_{1, t}, \mG_{2, t}$, respectively. Then with $m \geq 800k\ln(\frac{k}{4u}), u \geq 4\gamma_t^2$, we have:
    \begin{align*}
        \Pr\left[\|\Pi_{k, t}^1 - \Pi_{k, t}^2\|_F \leq 4\gamma_t\sqrt{\frac{2}{u}}\right] \geq 1 - u
    \end{align*}
    where $\gamma_t = \frac{\sqrt{\sum_{i=k+1}^n{s_{i,t}}}}{s_{k,t}}$.
\end{lemma}

\paragraph{Theorem \ref{thm:samplecomplexity} (Sample Complexity)}
    For Projected DP-SGD \cite{bypass}, assume that the loss $L(\mathbf{w})$ is $L$-Lipschitz, convex and $\beta$-smooth. Then with $|x_{priv}| = |x_t^{priv}| \geq 800k\ln{(k/16\gamma_t^2)}$, we have:
    \begin{align*}
        &\E\left[|\mathbf{GSD}_0(x_{pub}, x_{priv}) - \mathbf{GSD}_t(x_{pub}, x_{t})|\right] \\
        &\leq O(\sqrt{\log{\gamma_t}}) + \beta\sqrt{\frac{r(L^2 + k\sigma^2)}{2}}\sum_{i=1}^t(\frac{\eta_t}{\alpha^{pub}_{k, t}} + \frac{\eta_t}{\alpha^{priv}_{k, t}})
    \end{align*}
    where $\gamma_t = \frac{\sqrt{\sum_{i=k+1}^n{s_{i,t}}}}{s_{k,t}}$, $s_{1,t} \geq s_{2,t} \geq \dots \geq s_{k,t} \geq \dots \geq s_{n,t}$ are the singular values of population gradient matrix at step $t$.

\begin{proof}
    Continuing the notations in Theorem \ref{thm:uniform}, we have:
    \begin{align*}
        &\left|\|\Pi^{pub}_{k, t} - \Pi^{priv}_{k, t}\|_F - \|\Pi^{pub}_{k, t} - \Pi^{t}_{k, t}\|_F\right| \\
        & \leq \|\Pi^{priv}_{k, t} - \Pi^{t}_{k, t}\|_F
    \end{align*}
    (Note the difference between $\Pi^{priv}_{k, t}$ and $\Pi^{t}_{k, t}$. $\Pi^{priv}_{k, t}$ is computed upon $x_{priv}$ --- the private samples we used for calculating GSD, which are exactly the input private samples of Algorithm \ref{myalgo}. On the other hand, $\Pi^{t}_{k, t}$ is computed upon $x_t$, the minibatch at step $t$ when private training. $x_{priv}$ and $x_t$ come from the same private dataset, e.g., CIFAR-10, but do not have overlapping samples.)
    
    Applying Lemma \ref{thm:prob2exp} to Lemma \ref{thm:dpsubspace}, letting $B = 2\sqrt{2}$, $A = 4\gamma_t^2$ we have:
    \begin{align*}
        \E[\|\Pi_{k, t}^1 - \Pi_{k, t}^2\|_F] \leq O(\sqrt{\log{\gamma_t}})
    \end{align*}
    Combine it with Theorem \ref{thm:uniform}, we know:
    \begin{align*}
        &\E[\|\Pi^{pub}_{k, 0} - \Pi^{priv}_{k, 0}\|_F - \|\Pi^{pub}_{k, t} - \Pi^{t}_{k, t}\|_F] \\
        &\leq \E[\left|\|\Pi^{pub}_{k, 0} - \Pi^{priv}_{k, 0}\|_F - \|\Pi^{pub}_{k, t} - \Pi^{priv}_{k,t}\|_F\right|  \\
        &\quad + \left|\|\Pi^{pub}_{k, t} - \Pi^{priv}_{k, t}\|_F - \|\Pi^{pub}_{k, t} - \Pi^{t}_{k, t}\|_F \right| ] \\
        & \leq \E[\|\Pi^{priv}_{k, t} - \Pi^{t}_{k, t}\|_F] + \sqrt{2}\E[|\mathbf{GSD}_{t} - \mathbf{GSD}_0|]\\
        & \leq O(\sqrt{\log{\gamma_t}}) + \beta\sqrt{\frac{r(L^2 + k\sigma^2)}{2}}\sum_{i=1}^t(\frac{\eta_t}{\alpha^{pub}_{k, t}} + \frac{\eta_t}{\alpha^{priv}_{k, t}})
    \end{align*}
    which is:
    \begin{align*}
        &\E\left[|\mathbf{GSD}_0(x_{pub}, x_{priv}) - \mathbf{GSD}_t(x_{pub}, x_{t})|\right] \\
        &\leq O(\sqrt{\log{\gamma_t}}) + \beta\sqrt{\frac{r(L^2 + k\sigma^2)}{2}}\sum_{i=1}^t(\frac{\eta_t}{\alpha^{pub}_{k, t}} + \frac{\eta_t}{\alpha^{priv}_{k, t}})
    \end{align*}
    That completes the proof.
\end{proof}

\begin{lemma}[\textbf{(Theorem 6 of \cite{dppca1})}]
     DPPCA (Algorthim \ref{ppca}) is $\epsilon$-differentially private.
    \label{theorem6}
\end{lemma}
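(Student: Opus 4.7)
The plan is to bound the ratio of the output densities of DPPCA on two neighboring datasets by $e^{\epsilon}$, working directly with the matrix Bingham density. Fix neighboring data matrices $X,X' \in \mathbb{R}^{m\times p}$ differing in exactly one row, and let $A=\tfrac{1}{m}X^\top X$ and $A'=\tfrac{1}{m}X'^\top X'$. The algorithm samples $\hat V_k$ from $\mathbf{BMF}(B)$ with $B=\tfrac{m\epsilon}{2}A$, so the density at an orthonormal $V$ is
$$
f(V\mid A)=\frac{\exp\bigl(\operatorname{tr}(V^\top B V)\bigr)}{{}_1F_1\!\bigl(\tfrac{k}{2},\tfrac{p}{2},B\bigr)}.
$$
I would split the ratio $f(V\mid A)/f(V\mid A')$ into two factors: the pointwise exponential factor $\exp(\operatorname{tr}(V^\top(B-B')V))$, and the ratio of the confluent hypergeometric normalizers. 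Both factors will be controlled by the same scalar sensitivity bound.

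The key step is a pointwise bound on $|\operatorname{tr}(V^\top(A-A')V)|$. Under the row-clipping convention of DP-GSD (so that, after rescaling, each row has norm at most $1$), we have $A-A' = \tfrac{1}{m}(xx^\top - x'x'^\top)$ with $\|x\|,\|x'\|\le 1$. For any $V$ with $V^\top V = I_k$, $\operatorname{tr}(V^\top xx^\top V)=\|V^\top x\|^2\in[0,\|x\|^2]$, and similarly for $x'$. Hence
$$
\bigl|\operatorname{tr}(V^\top(A-A')V)\bigr|\le \tfrac{1}{m},
$$
which immediately gives $\exp\bigl(\operatorname{tr}(V^\top(B-B')V)\bigr)\le \exp\bigl(\tfrac{m\epsilon}{2}\cdot\tfrac{1}{m}\bigr)=e^{\epsilon/2}$ uniformly over the Stiefel manifold.

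The normalizing constant factor is then handled by integrating the pointwise bound: since the integrand defining ${}_1F_1(\tfrac{k}{2},\tfrac{p}{2},B')$ is at most $e^{\epsilon/2}$ times that for $B$ at every orthonormal $V$, the same inequality transfers to the integrals, so the ratio of normalizers is also at most $e^{\epsilon/2}$. Multiplying the two $e^{\epsilon/2}$ contributions yields $f(V\mid A)/f(V\mid A')\le e^{\epsilon}$, establishing $\epsilon$-DP. The main obstacle I anticipate is pinning down the neighboring-dataset convention and sensitivity scaling: under a \emph{replace-one} neighbor with unit-norm rows the bound $\tfrac{1}{m}$ above is tight, while under add/remove or alternative norm bounds one picks up an extra constant (which is exactly why the outer Theorem~\ref{privacy} inherits the $1/c^2$ factor from the clip norm). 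Once that bookkeeping is fixed, the two-factor Bingham argument is the entire proof.
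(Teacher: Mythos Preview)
The paper does not actually prove this lemma: it is imported verbatim as Theorem~6 of \cite{dppca1} and used as a black box. The only place the paper engages with the argument is in the proof of Theorem~\ref{privacy}, where it identifies the Bingham sampler as an instance of the exponential mechanism with score $q(V;X)=m\,\operatorname{tr}(V^\top A V)$ and bounds its sensitivity.

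Your proposal is correct and is exactly the exponential-mechanism argument that \cite{dppca1} uses, just unfolded at the level of densities rather than phrased abstractly. Splitting the density ratio into the pointwise exponential factor and the normalizer, and bounding each by $e^{\epsilon/2}$ via the uniform sensitivity bound $|\operatorname{tr}(V^\top(A-A')V)|\le 1/m$ over the Stiefel manifold, is the standard route. Your caveat about the neighboring convention is also on point: the bare $\epsilon$ guarantee in the lemma presumes unit-norm rows (replace-one), and the $1/c^2$ factor in Theorem~\ref{privacy} is precisely the adjustment the paper makes when rows are only clipped to norm $c$ rather than $1$. So there is no genuine difference in approach; you have simply written out what the paper defers to the citation.
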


\paragraph{Theorem \ref{privacy} (Privacy Guarentee).} Let Algorithm \ref{ppca} be an implementation of $\mathbf{DPPCA}$ in DP-GSD, then DP-GSD is $\epsilon/c^2$-differentially priate.

\begin{proof}
    Let $x_{priv}$ be $m$ private examples. $G_{priv} = \mathbf{Clip}(\nabla \mathcal{L}(\mathbf{w}_0, x_{priv}), c) \in \R^{m\times p}$ is per-sample gradient matrix and $A = \frac{1}{m}G_{priv}G_{priv}^{\top}$. We sample top-$k$ eigenvectors from the matrix Bingham distribution \citep{matrixbingham}:

    \begin{equation}
    f(V | A, k, p)=\frac{1}{{ }_1 F_1\left(\frac{1}{2} k, \frac{1}{2} p, A\right)} \exp \left(\operatorname{tr}\left(V^T A V\right)\right)
    \end{equation}

    with $A = \frac{m\epsilon}{2c^2}A$. We show that this is the exponential mechanism applied to the score function $q(V; x) = mv^TAv$. 
    
    Consider the neighboring data $X'_{priv} = [x_1, ..., x'_i, ..., x_m]$ that differ from $X_{priv}$ with one data example $x'_i$. Let $G'_{priv} = \mathbf{Clip}(\nabla \mathcal{L}(\mathbf{w}_0, x'_{priv}), c)$ and $A' = \frac{1}{m}G_{priv}^{'\top}G'_{priv}$. We have
    \begin{align*}
        \Delta q &= \max \left|mv^TAv - mv^TA'v \right| \\
        & \le \left|v^T(g_ig_i^T - g'_ig^{'\top}_i)v\right| \\
        & \le \left\|v^Tg_i|\|^2 - \| v^Tg'_i\|^2\right| \\
        & \le \left\|v^T\nabla \mathcal{L}(\mathbf{w}_0, x_i)|\|^2 - \| v^T\nabla \mathcal{L}(\mathbf{w}_0, x'_i)\|^2\right| \\
        & \le c^2
    \end{align*}
    
    Therefore, from \ref{theorem6}, we know that $\mathbf{DPPCA}$ in DP-GSD (Algorithm \ref{dpgsd}) is $\epsilon/c^2$-differentially private. Thus, DP-GSD is $\epsilon/c^2$-differentially private because of post-processing.
\end{proof}

\begin{lemma}
\label{lemma:utility}
    \textbf{(Theorem 7 of \cite{dppca1})} Let $k = 1$, the private gradient subspace $V^{priv}_k$ in GSD and the private gradient subspace $\hat{V}^{priv}_k$ from Algorithm \ref{ppca} satisfy 
    \begin{equation}
    \operatorname{Pr}\left(\left|\left\langle V^{priv}_k, \hat{V}^{priv}_k\right\rangle\right|>\rho\right) \geq 1-\eta
    \end{equation} 
    if we have
    \begin{equation}
    m>\frac{p}{\epsilon \alpha(1-\rho)}\left(4 \frac{\log (1 / \eta)}{p}+2 \log \frac{8 \lambda_1}{\left(1-\rho^2\right) \alpha}\right)
    \end{equation}
     where $\lambda_1$ is the top eigenvalue, $\alpha = \lambda_1 - \lambda_2$ is the eigen-gap, $p$ is the model dimension and $\epsilon$ is the privacy parameter.
     \label{theorem7}
\end{lemma}

\paragraph{Theorem \ref{utility} (Utility Guarantee)} Under the assumption that $\mathcal{L}(\mathbf{w}, x)$ is 1-Lipschitz, let Algorithm \ref{ppca} be an implementation of $\mathbf{DPPCA}$ in DP-GSD, then for $k = 1$, the distance given by DP-GSD, $\hat{d}(V^{priv}_k, V^{pub}_k)$ is $(\rho, \eta)$-close to the distance given by GSD, $d(V^{priv}_k, V^{pub}_k)$, if we have

    \begin{equation}
        m>\frac{pc^2}{\epsilon \widetilde{\lambda}(1-\sqrt{1 - \rho ^ 2})}\left(4 \frac{\log (1 / \eta)}{p}+2 \log \frac{8 \lambda_1}{\rho^2 \widetilde{\lambda}}\right)
    \end{equation}
    where $\lambda_1$ is the top eigenvalue, $\lambda = \lambda_1 - \lambda_2$ is the eigen-gap, $ \widetilde{\lambda} = \lambda + 2|c^2 - 1|\sigma_1^2$, $p$ is the model dimension, $c$ is clip norm and $\epsilon$ is the privacy parameter.
    
\begin{proof}
    Let $x$ be $m$ examples, $\mG = \nabla \mathcal{L}(\mathbf{w}_0, x) \in \R^{m \times p}$ be the corresponding gradients, $\tilde{\mG} = \texttt{Clip}(\mG, c)$, $\mA = \mG^{\top}\mG$, $\widetilde{\mA} = \tilde{\mG}^{\top}\tilde{\mG}$, $\Delta \mA = \mA - \widetilde{\mA}$. Let $\lambda = \lambda_1(\mA)-\lambda_2(\mA)$ be the eigen gap of $\mA$, $\widetilde{\lambda} = \lambda_1(\widetilde{\mA})-\lambda_2(\widetilde{\mA})$ be the eigen gap of $\mA$. Applying Wely's inequality, we have:
    \begin{equation}
        \left|\lambda_1(\widetilde{\mA})-\lambda_1(\mA)\right| \leq\|\Delta \mA\|, \quad\left|\lambda_2(\widetilde{\mA})-\lambda_2(\mA)\right| \leq\|\Delta \mA\|
    \end{equation}
    Thus, we have:
    \begin{equation}
        \widetilde{\lambda} \leq \lambda + 2 ||\Delta \mA||
    \end{equation}
    Assume that every gradient will be clipped, let $\mD$ be a diagonal matrix with diagonal entries $D_{ii} = \frac{c}{||\vg_i||_2}$ where $\vg_i, i = 1, \dots, m$ is the row of $\mG$. Therefore, $\widetilde{\mA} = \mG^{\top}\mD^2\mG$, and
    \begin{equation}
    \begin{aligned}
        \|\Delta \mA\| &= \|\widetilde{\mA} - \mA\| = \|\mG^{\top}\mD^2\mG - \mG^{\top}\mG\| \\
                    &= \|\mG^{\top}(\mD^2 - \mI)\mG\| \\
                    &\leq |c^2 - 1| \|\mG^{\top}\mG\| \\
                    &= |c^2 - 1|\sigma_1^2
    \end{aligned}
    \end{equation}
    where $\sigma_1$ is the largest singular value of $\mG$. Therefore we have
    \begin{equation}
    \begin{aligned}
        \widetilde{\lambda} &\leq \lambda + 2 ||\Delta \mA|| \\
                            &\leq \lambda + 2|c^2 - 1|\sigma_1^2
    \end{aligned}
    \end{equation}
    
    Let $V^{priv}_k$ be the private gradient subspace in GSD and $\hat{V}^{priv}_k$ be the private gradient subspace in DP-GSD, $d(V^{priv}_k, \hat{V}^{priv}_k)$ be the projection metric distance between two subspaces. From Lemma \ref{theorem7}, we have
    \begin{align*}
        &\operatorname{Pr}\left(\left|\left\langle V^{priv}_k, \hat{V}^{priv}_k\right\rangle\right|>\mu\right) \\
         &= \operatorname{Pr}\left( d(V^{priv}_k, \hat{V}^{priv}_k) \le \sqrt{1 - \rho^2}\right) \geq 1-\eta \\
        & \xrightarrow[]{(a)} \operatorname{Pr}\left( \left|\hat{d}(V^{priv}_k, V^{pub}_k) - d(V^{priv}_k, V^{pub}_k)\right| \le \sqrt{1 - \mu^2}\right) \geq 1-\eta
    \end{align*}
    (a) holds because of the triangle inequality of projection metric \citep{projectionmetric}. Substituting $\sqrt{1 - \mu^2}$ with $\rho$, and $\lambda$ with $\widetilde{\lambda}$ in Lemma \ref{lemma:utility}, we have $\hat{d}(V^{priv}_k, V^{pub}_k)$ is $(\rho, \eta)$-close to $d(V^{priv}_k, V^{pub}_k)$.
\end{proof}

\end{document}